\theoremstyle{thmstyleone}%
\newtheorem{theorem}{Theorem}[section]% meant for sectionwise numbers
\newtheorem{proposition}[theorem]{Proposition}% 
\newtheorem{lemma}[theorem]{Lemma}
\theoremstyle{remark}%
\newtheorem{remark}{Remark}%
\newtheorem{observation}{Observation}
\theoremstyle{definition}%
\newcommand{\cL}{\mathcal{L}}
\newcommand{\cC}{\mathcal{C}}
\newcommand{\R}{\mathbf{R}}
\newcommand{\cD}{\mathcal{D}}
\newcommand{\beq}{\begin{equation}}
\newcommand{\eeq}{\end{equation}}
\newcommand{\rank}{\mathrm{rank}}
\newcommand{\Span}{\mathrm{span}}
\newcommand{\pa}[1]{\left(#1\right)}
\newcommand{\bra}[1]{\left[#1\right]}
\newcommand{\E}{\mathbf{E}}
\DeclareMathOperator{\SoftMax}{Softmax}
\newcommand{\der}[2]{\frac{\partial#1}{\partial#2}}
\newcommand{%
			
			\import{./}{.pdf_tex}
}[2]{%
			
			\import{./}{#2.pdf_tex}
}
\newcommand{\eliot}[1]{{#1}}
\newcommand{\rita}[1]{{#1}}
\begin{document}

\title[Geometry of Singular Foliations and Learning Manifolds in ReLU Net via the DIM]{Geometry of Singular Foliations and Learning Manifolds in ReLU Networks via the Data Information Matrix}

%%=============================================================%%
%% GivenName	-> \fnm{Joergen W.}
%% Particle	-> \spfx{van der} -> surname prefix
%% FamilyName	-> \sur{Ploeg}
%% Suffix	-> \sfx{IV}
%% \author*[1,2]{\fnm{Joergen W.} \spfx{van der} \sur{Ploeg} 
%%  \sfx{IV}}\email{iauthor@gmail.com}
%%=============================================================%%

\author*[1,3]{\fnm{Eliot} \sur{Tron}}\email{eliot.tron@univ-paris1.fr}

\author[2]{\fnm{Rita} \sur{Fioresi}}\email{rita.fioresi@unibo.it}
% \equalcont{These authors contributed equally to this work.}

%% Author affiliation
\affil*[1]{
				% \orgdiv{OPTIM},
				\orgname{Ecole Nationale de l'Aviation Civile},
				\orgaddress{\street{7 Avenue Edouard Belin},
								\city{Toulouse},
								\postcode{31055}, 
								\country{France}}}

\affil[2]{\orgdiv{FaBiT}, 
				\orgname{Università di Bologna},%Department and Organization
				\orgaddress{\street{via S. Donato 15},
								\city{Bologna},
								\postcode{I-40126}, 
								\country{Italy}}}

\affil[3]{
				\orgdiv{SAMM},
				\orgname{Université Paris 1 Panthéon Sorbonne},
				\orgaddress{\street{31 rue Baudricourt},
								\city{Paris},
								\postcode{75013},
								\country{France}}}

%% Abstract
\abstract{
%% Text of abstract
Understanding how real data is distributed in high dimensional spaces is the key to many tasks in machine learning. We want to provide a natural geometric structure on the space of data employing a ReLU neural network trained as a classifier. Through the Data Information Matrix (DIM), a variation of the Fisher information matrix, the model will 
discern a singular foliation structure on the space of data. We show that the singular points of 
such foliation are contained in a measure zero set, and that a local regular foliation exists almost everywhere. 
Experiments show that the data is correlated with leaves
of such foliation.
Moreover we show the potential of our approach for knowledge transfer by analyzing the 
spectrum of the DIM to measure distances between datasets. 
}

%% Keywords
\keywords{
ReLU Networks, Deep Learning, Data Information Matrix, Singular Foliations, Learning Manifolds, Dataset Distances, Knowledge Transfer, Information Geometry. 
}

\maketitle

%%\pacs[JEL Classification]{D8, H51}

%%\pacs[MSC Classification]{35A01, 65L10, 65L12, 65L20, 65L70}

\section{Introduction}

The concept of manifold learning lies at the very heart of
the dimensionality reduction question, and
it is based on the assumption that there exists a natural
Riemannian manifold structure on the space of data
\cite{ten1998, hinton2003, ten2000, feff2016}. Indeed, with such
assumption, many geometrical tools become readily available for machine
learning questions, especially in connection with
the problem of knowledge transfer \cite{bozi2020, weiss2016}, as
%for the definition of concepts as
geodesics, connections, Ricci curvature and similar
\cite{ache2018}.
%(see also \cite{murphy2022} for a description the most popular techniques
%in dimensionality reduction and
%\cite{burges} for a complete bibliography or the origins of the subject).
In particular, the fast developing field of Information
Geometry \cite{amari1998, martens2020}, is now providing with the techniques
to correctly address such questions \cite{maillet}.

However, the practical situation that arises, for instance, in the classification task of benchmark datasets such as MNIST~\cite{lecun1998}, Fashion-MNIST~\cite{fashionmnist} and similar ones is usually too high-dimensional to allow for such straightforward description.
This complexity requires for more sophisticated mathematical modelling, which we shall explore here by replacing the notion of manifold with that of singular foliation. 

Deep Learning
vision classifier models, via their intrinsic hierarchical
structure, offer a natural representation and implicit organization of the input data~\cite{olah}. % input data
%making classes linearly separable.
For example, the authors in~\cite{hinton2006} show how a multilayer
encoder network can be successfully employed to transform high
dimensional data into low dimensional code, then to retrieve it via a ``decoder''.

%Such organisation strikingly mirrors the one occurring in the
%human visual perception, as also observed in the seminal work
%\cite{ten2000}, and later an inspiration for the
%spectacular successful employment of CNNs for supervised classification %tasks
%\cite{lecun2015}.

In this paper, we want to provide the data space
of a given dataset with a natural geometrical structure,
and then employ such structure to extract key information.
%{\color{red} we shall apply to other datasets contained in the same data space}
%with the aim of knowledge transfer, {\color{red} %towards at other
%that we shall investigate %realize examining 
%taking into exam other datasets contained in the same data space.}
%Our approach will go beyond the standard techniques of manifold learning,
%necessarily bound by the ``manifold hypothesis'',
%that may be too restrictive in some key applications.
We will employ a {suitably trained
neural network} model to define a {\sl foliation} structure
in the data space and show {experimentally}
how the training set of our model 
{is strongly correlated with its %occupies one of the 
leaves, which are submanifolds of the data space.} 
The mathematical idea of foliation is quite old (see \cite{reeb, ehre}
and refs therein); however, its applications in control theory (see \cite{agra2004} and refs therein)
via sub-Riemannian geometry and machine
learning have only recently become increasingly
important \cite{tron2022, gf2021}.

The foliation structure on the data space, discerned by our model, 
however, is non-standard and presents {\sl singular} 
%a {\it degeneracy locus}, consisting of
points. These are points admitting a neighbourhood
where the rank of the distribution, tangent at each point to a leaf of the foliation, 
{changes}. %is non constant. 
Moreover, in the presence of typical
non-smooth activation functions of the network (\textit{e.g.} ReLU, LeakyReLU, Maxpool or similar),
there are also non-smooth points where the manifold structure of the leaf itself is questioned.
% thus hindering in
% such points the smooth manifold structure itself of the leaf.}
However, we prove that both singular and non-smooth points are a measure zero
set in the data space, so that the foliation is almost everywhere regular
and its distribution well defined.
%for the in the data space.}
%, hence it is not possible to 
%i.e. a set of measure zero
%of singular points.
As it turns out {in our experiments, the samples belonging to the 
dataset we train our model with are averagely close}
to the set of singular points. It forces us to
model the data space with a singular foliation that
we call {\sl data foliation} in analogy with the data manifold. 
Applications of singular foliations were introduced %more recently, 
in connection with control theory  \cite{suss1973};
their study is currently an active
area of investigation \cite{lavau2017}. 
%are currently an active area of investigation, especially in connection with
%control theory \cite{suss1973}. 
%and their study is currently
%an active area of mathematical investigation \cite{lavau2017}. 
As we shall show in our experiments, singular foliations and distributions provide with an effective tool to single out the samples belonging to the datasets the model was trained with and at the same time discern a notion of distance between different datasets belonging to the same data space.
%Such distance is then investigated together with its importance for knowledge
%transfer, by looking
%at retraining time of our model for different classification tasks.

\subsection*{Main contributions.}
\begin{itemize}
	\item The introduction of a singular geometrical framework to study neural networks with the Data Information Matrix (DIM).
	\item The proof of its well-definednes (a.e.) for ReLU neural networks, given by \cref{thm:main}.
	\item The experimental study of the DIM's eigenvalues and rank on different datasets, which sheds light on its understanding of the network's training.
	\item A toy example (knowledge transfer) that provides a perspective on the relevance of our framework for concrete applications in machine learning.
\end{itemize}

\subsection*{Organization of the paper.}
In \cref{rel-sec}, we recap the
previous literature, closely related to our work.

In \cref{met-sec}, we start by recalling some known facts about Information
Geometry in \cref{info-sec}.
Then, in \cref{fol-sec} we introduce the data information
matrix (DIM), defined via a given model,
and two distributions $\cD$ and $\cD^\perp$ naturally associated
with it, together with some illustrative examples elucidating the
associated foliations and their leaves. %taken from the literature. 
In \cref{sing-sec} we introduce singular distributions
and foliations and then we prove our main theoretical results
expressed in \cref{lem:ker_P-pp} and \cref{thm:main}.
\cref{lem:ker_P-pp} studies the singularities of the distribution $\cD$. 
%In particular in \cref{obs-sing}, we notice, in practice, the behaviour
%of dataset points.
%that the distribution
%$\cD$, at the points belonging to the dataset the model was trained with, shows averagely lower rank, than the rank at random points in data space.
%we see a drop in the rank of the DIM, 
%and consequently of the distribution $\cD$ and the corresponding foliation. 
\cref{thm:main} establishes that the singular points
for $\cD$ are a measure zero set in the data space. Hence our foliation, 
whose existence relies on mathematical results \cref{thm:frobenius}, that we apply as in \cref{prop:fol},
though singular, acquires geometric significance.
 
Finally in \cref{exp-sec} we elucidate our main results with experiments.
Moreover, we show how %we may 
the foliation structure and its singularities can be exploited 
to determine which dataset the model was trained on, and its distance
from similar datasets in the same data space. 
We also make some ``proof of concept'' considerations
regarding knowledge transfer to show the potential of the mathematical singular
local foliation and distribution structures in data space.
%in understanding the training time cost, whenever retraining
%a given model on different datasets.

%and how such distance reflects into retraining time of a model.
%\medskip
%Our paper is organized as follows.
%The degeneracy of the foliation at each data point can be
%effectively measured, statistically, by the {\sl data matrix}. Such matrix is defined as the
%Fisher matrix on the space of data, that is fixing the model parameters
%and varying data points into the dataspace.

\section{Related Works} \label{rel-sec} %\commentr{R. not sure we will have it}

The use of Neural Networks as tools for both manifold learning
and knowledge transfer has been extensively investigated. {We give a brief overview
on the literature closer to our contribution.}

{\textbf{Latent manifold}}. The question of finding a low dimensional
manifold structure ({\sl latent manifold}) in high dimensional
dataset spaces starts with PCA and similar methods to reach the
more sophisticated techniques as in~\cite{feff2016, maillet}.
The reader can also look at \cite{murphy2022} for a description of the most popular techniques
in dimensionality reduction and
\cite{burges2010} for a complete bibliography or the origins of the subject.
{%\bf Knowledge transfer via Latent manifold.}
Such understanding was applied towards the knowledge transfer questions in 
\cite{cook2007, ben2012} and more recently in~\cite{dikkala2021}. 
{ Moreover, distances between datasets were also studied through Optimal Transport in \cite{Alvarez-Melis2019Geometric, Hua2023DynamicFO}.} 
%it is also important to mention that one
Another important technique of knowledge transfer is via shared parameters
as in \cite{maurer2016}.
{The idea of using techniques of Information Geometry for 
%applications to 
machine learning started with~\cite{amari1998}. More recently, it was used
for manifold learning
in \cite{maillet} (see also refs therein). 

\textbf{Foliations.} Employing}
foliations to reduce dimensionality is not per se
novel. 
% In \cite{gf2021} a certain foliation structure on
% the data space allows to perform dimensionality reduction experimentally
% by showing how data points, for popular datasets, {\color{red} under suitable conditions, are related to the leaves of the foliation}.
%{\color{purple}
In \cite{gf2021}, the authors introduce some foliation structure in the data space of a neural network, with the aim of approaching dimensionality reduction {exploiting the low rank of the equivalent of the Fisher matrix on the dataspace. 
However, they cannot conclude on the link between the manifold hypothesis and the considered foliation. Dimensionality reduction remains then a conjecture. 
Observations on the low rank of the Fisher matrix,
and its key role in CNN parameter space, appear also in \cite{Sun2017RelativeFI, Sun2024}}.
%belong to just one leaf of the foliation. 
In \cite{tron2022}, orthogonal foliations in the data
space are used to create adversarial attacks, and provide
the data space with curvature and other Riemannian notions to analyze such attacks.
In \cite{szalai2023} invariant foliations are employed to produce a
reduced order model for autoencoders. Also, in considering singular
points for our foliation, we are led to study the singularities of a neural
network. This was investigated, for the different purpose of network expressibility
in \cite{hanin2019}.{The partitioning of the data space into connected
components after removing non-smooth points of ReLU CNNs has been studied in \cite{Hanin2019ComplexityOL} and 
%the expressibility  significance of such partition was discussed 
in \cite{Montfar2014OnTN}.}

\section{Foliations for ReLU Neural Networks} \label{met-sec}

\subsection{The Information Geometry of Neural Networks} \label{info-sec}
\eliot{
In this article, we will focus our attention on neural networks $N_w$ with linear layers, ReLU activation functions and a Softmax at the end, that we call \emph{ReLU neural networks}.
In fact, the results of this article can easily be extended to all piecewise linear activation functions
\rita{such as MaxPool or AveragePool}. 
More precisely, the network is given by $N_w = \SoftMax \circ S_w$ with the score function (or logits) $S_w$ being a composition of linear layers and activation functions defined by:
\begin{equation}\label{eq:chain_rule_score:RNR} %DONE: mettre plus haut
S_w(x)=L_{W_\ell} \circ \sigma \circ \dots \circ \sigma \circ L_{W_1}
\end{equation}
where $\sigma$ is the ReLU non linearity, $L_{W_i}$ are linear layers (including
bias), $w$ is the collection of all the parameters of the model (\textit{e.g.} the $W_i$'s) and $\ell$ is the total number of linear layers.
%We also introduce the following notation. 

The goal of this network will be the classification of the input $x\in\R^d$ between $c$ classes $\{y_1,\ldots,y_c\}$.
The function $N_w$ associates to each input $x$ the probability of belonging to each class, that we will denote by the vector
\begin{equation}\label{probdef}
				p := N_w(x) = \pa{p\pa{y_1\mid x,w}, \ldots, p\pa{y_c \mid x,w}} 
\end{equation}
In these settings, the output of the network gives the parameters of a categorical distribution from which the predicted class $Y$ may be sampled from.
}

% The  statistical simplex associated to the output of a neural network classifier $N_w(x)$ consists of the probability
% distributions vectors in the form $p(y|x, w)=(p_1(y|x, w) , \dots , p_c (y|x, w))$,
% where $x$ is a \textit{data point}, belonging to a certain dataset %$D \subset 
% in $\R^d$,
% divided into $c$ classes, while $w \in \R^N$ are
% the learning parameters \cite{amari1998, jost2008, martens2020}.
We define the \textit{information loss} as:
$I(x,w)= - \ln(p(y|x,w))$ and the \textit{Fisher information matrix} (FIM) \cite{rao1945} as
% \beq\label{fisher}
% F(x,w) = \bE_{y\sim p}[\nabla_w\log p(y|x,w) \cdot (\nabla_w \log p(y|x,w))^T] 
% \eeq
\eliot{
\begin{equation}\label{fisher}
				F_{i,j}(x,w) := \E_{Y\mid x,w} \bra{\pa{\partial_{w^i} \ln p\pa{Y\mid x,w}}\pa{\partial_{w^j} \ln p\pa{Y\mid x,w}}}, \quad 1\leq i,j\leq N
.\end{equation}
}

%Indeed, we define,
In analogy
to (\ref{fisher}) we define the \textit{data information matrix} (DIM):
\eliot{
\begin{equation}\label{dim}
				D_{i,j}(x,w) := \E_{Y\mid x,w} \bra{\pa{\partial_{x^i} \ln p\pa{Y\mid x,w}}\pa{\partial_{x^j} \ln p\pa{Y\mid x,w}}}, \quad 1\leq i,j\leq d 
.\end{equation}
}

\eliot{To compute the DIM in practice, one must then compute the Jacobian matrix of the probability density $p$, \textit{i.e.} the output  of the  network $N_w$, with respect to its input $x$. This is typically achieved through automatic differentiation \cite{baydinAutomaticDifferentiationMachine2017}, implemented in most machine learning libraries.}
% {\color{red} The input of the DIM is given by a probability distribution as in (\ref{probdef}), for a partially trained network, as we shall see later on.}

As noted in \cite{maillet}, some directions in the parameter or data space, may be more
relevant than others in manifold learning theory.
We have the following result \cite{gf2021}, obtained with a direct calculation.

\begin{proposition}
The Fisher information matrix $F(x, w)$ and 
the data information matrix 
$D(x, w)$ are positive semidefinite symmetric matrices. Moreover, \eliot{if the $\Span$ of a set of vectors denotes the set of all their linear combinations,}
\beq\label{thm:ker}
\begin{array}{c}
  \ker F(x,w)= (\Span_{i=1, \ldots, c}\{
\nabla_w \ln p(y_i|x,w)\})^\perp, \\ \\
\ker D(x,w)= (\Span_{i=1, \ldots, c}\{
\nabla_x \ln p(y_i|x,w)\})^\perp,
\end{array}
\eeq
where the orthogonal $\perp$ is taken with respect to the Euclidean product. 
In particular, \[\rank\ F(x,w) < c \quad \text{and} \quad \rank\ D(x,w) < c.\]
\end{proposition}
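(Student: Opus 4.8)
The plan is to exploit the fact that both matrices are Gram matrices of a finite collection of vectors, and that the statistical simplex constraint forces a linear dependence among those vectors. First I would write $F(x,w)$ explicitly as a sum of rank-one terms: since $\nabla_w \log p(y|x,w)$ is, for each fixed label $y$, just the vector $\nabla_w \log p_y(y|x,w)$, the expectation over $y\sim p$ gives
\[
F(x,w) = \sum_{i=1}^{c} p_i(y|x,w)\, \nabla_w \log p_i(y|x,w)\,\bigl(\nabla_w \log p_i(y|x,w)\bigr)^{T},
\]
and likewise for $D(x,w)$ with $\nabla_x$ in place of $\nabla_w$. This immediately shows both matrices are symmetric, and positive semidefinite because each summand $v v^{T}$ satisfies $u^{T} v v^{T} u = \langle v,u\rangle^{2}\ge 0$ and the weights $p_i$ are nonnegative.

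Next I would identify the kernel. For a Gram-type matrix $M = \sum_i \lambda_i v_i v_i^{T}$ with $\lambda_i > 0$, one has $u^{T} M u = \sum_i \lambda_i \langle v_i,u\rangle^{2}$, so $Mu = 0$ iff $u^{T}Mu = 0$ iff $\langle v_i, u\rangle = 0$ for every $i$ with $\lambda_i \ne 0$; hence $\ker M = (\Span_i\{v_i\})^{\perp}$. Applying this with $v_i = \nabla_w \log p_i$ (resp. $\nabla_x \log p_i$) gives exactly the two displayed kernel formulas. A small point to handle carefully is the set where some $p_i(y|x,w) = 0$: there the corresponding term drops out of the sum, but then $\log p_i$ is $-\infty$ and its gradient is not literally defined, so one should either restrict to the (generic, full-measure) region where all $p_i > 0$, or note that such terms contribute nothing and phrase the span over the indices $i$ with $p_i \ne 0$; in the softmax/ReLU setting relevant to the paper the outputs are strictly positive, so $p_i > 0$ everywhere and this is a non-issue.

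Finally, for the rank bound: differentiating the normalization $\sum_{i=1}^{c} p_i(y|x,w) = 1$ with respect to $w$ (resp. $x$) gives $\sum_{i=1}^{c} \nabla p_i = 0$, equivalently $\sum_{i=1}^{c} p_i \nabla \log p_i = 0$. This is a nontrivial linear relation (the coefficients $p_i$ are not all zero, they sum to $1$) among the $c$ vectors $\{\nabla \log p_i\}_{i=1,\dots,c}$, so their span has dimension at most $c-1$. Since $\rank M = \dim \Span_i\{v_i\}$ for $M$ a nonnegatively-weighted Gram matrix (the rank equals the codimension of the kernel, and the kernel is the orthogonal complement of the span), we conclude $\rank F(x,w) \le c-1 < c$ and $\rank D(x,w) \le c-1 < c$. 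I expect the only real subtlety to be the bookkeeping around vanishing probabilities and the precise justification that $\rank M$ equals the dimension of the span rather than just being bounded by $\min(c, \dim)$ — both are routine but worth stating cleanly; the rest is a direct computation exactly as the excerpt advertises.
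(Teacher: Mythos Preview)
Your proof is correct and is precisely the kind of direct calculation the paper advertises: the paper itself does not spell out a proof, merely citing \cite{gf2021} and noting the result follows from a direct computation. Your Gram-matrix decomposition, the kernel identification via the quadratic form $u^T M u = \sum_i p_i \langle v_i,u\rangle^2$, and the rank bound from the simplex constraint $\sum_i p_i \nabla\log p_i = 0$ are exactly what that computation amounts to.
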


Notice that $F(x,w)$ is a $N\times N$ matrix, 
while $D(x,w)$ is a $d\times d$ matrix, where
in typical applications (e.g. classification tasks) $N, d >> c$. 
Hence, both $F(x,w)$ and $D(x,w)$ are, in general, singular matrices, 
with ranks typically low with respect to their sizes, hence
%Though 
neither $F(x,w)$ nor $D(x,w)$ can provide meaningful metrics
respectively on parameter and data spaces. We shall now focus on
the data space.

Recall that a \emph{distribution} is the association to each point $x$ of a linear subspace of the tangent space at $x$.
The result (\ref{thm:ker}) then
suggests to consider the distribution %$\cF$, 
$\cD$:
\beq\label{distr-def}
\begin{array}{c}
%\R^N \ni w \mapsto \cF_w:=\Span_{i=1, \ldots, c}\{
%\nabla_w \log p_i(y|x,w)\} \subset T_w\R^N,\\ \\
\R^d \ni x \mapsto \cD_x:=\Span_{i=1, \ldots, c}\{
\nabla_x \ln p(y_i|x,w)\} \subset T_x\R^d
\end{array}
\eeq
where we assume %for $\cF_w$, the point in the data space $x$ to be fixed,as $w$ varies, while 
%for $\cD_x$, we assume
the learning parameters $w \in \R^N$ to be fixed as $x$ varies
{in the dataspace}.
In general a distribution on a manifold $M$ assigns to each point a 
subspace of the tangent space to $M$ at that point (see \cite{tu2008, tu2017} for more
details), in our case $M=\R^d$.
%We shall focus on the distribution $\cD$, hence
%In our setting $M$ is the Euclidean space, i.e. $M=\R^N$ of 
%$M=\R^d$.

{\begin{observation}\label{rk-d}
The distribution $\cD$ at each point in the dataspace
coincides with the image of $D$ 
at that point as defined in $(\ref{dim})$.
This is a direct check, (see \cite{gf2021}). 
Hence the rank of \rita{the distribution} $\cD$ at $x$, i.e. the dimension of $\cD_x$, and 
the rank \rita{of the DIM} $D$ at x, i.e. the dimension of its image, %the column space, i.e. the image of $D$, 
coincide.
\end{observation}}

%The peculiarity of DIM,
%prompted applications in the field of adversarial attacks as
%in \cite{tron2022}, by exploiting the distribution and foliation structures 
%coming naturally from $D(x,w)$.

\subsection{Distributions and Foliations} \label{fol-sec}
Whenever we have a distribution on a manifold $M$, 
it is natural to ask whether it is {\sl integrable}.
A distribution $\cC$ on $M$ is \textit{integrable} if for every
$x \in M$, there exists %a maximal integral manifold $N$ through $x$, that is,$N$ is
a connected immersed submanifold $N$ of $M$, with $T_y N = \cC_y$ for all $y \in N$.
In other words, the distribution defines at each point the tangent space
to a submanifold. Whenever a distribution $\cC$ is integrable, we have a \textit{foliation},
\rita{denoted by $\cL_\cC$.  The manifold} $M$ becomes a disjoint union of embedded submanifolds
called the \textit{leaves} of the foliation. If $\dim \cC_x=\dim \cC_y$ for
all $x,y \in M$ we say that the foliation, or the corresponding
distribution, has \textit{constant rank}. 
%, see
\cref{fol-constrk} illustrates %for an example of 
a constant rank foliation and its
orthogonal complement foliation, with respect to 
the Euclidean metric in $M=\R^3$ (the ambient space).
\begin{figure}%[h!]
\begin{center}
\includegraphics[width=0.8\textwidth]{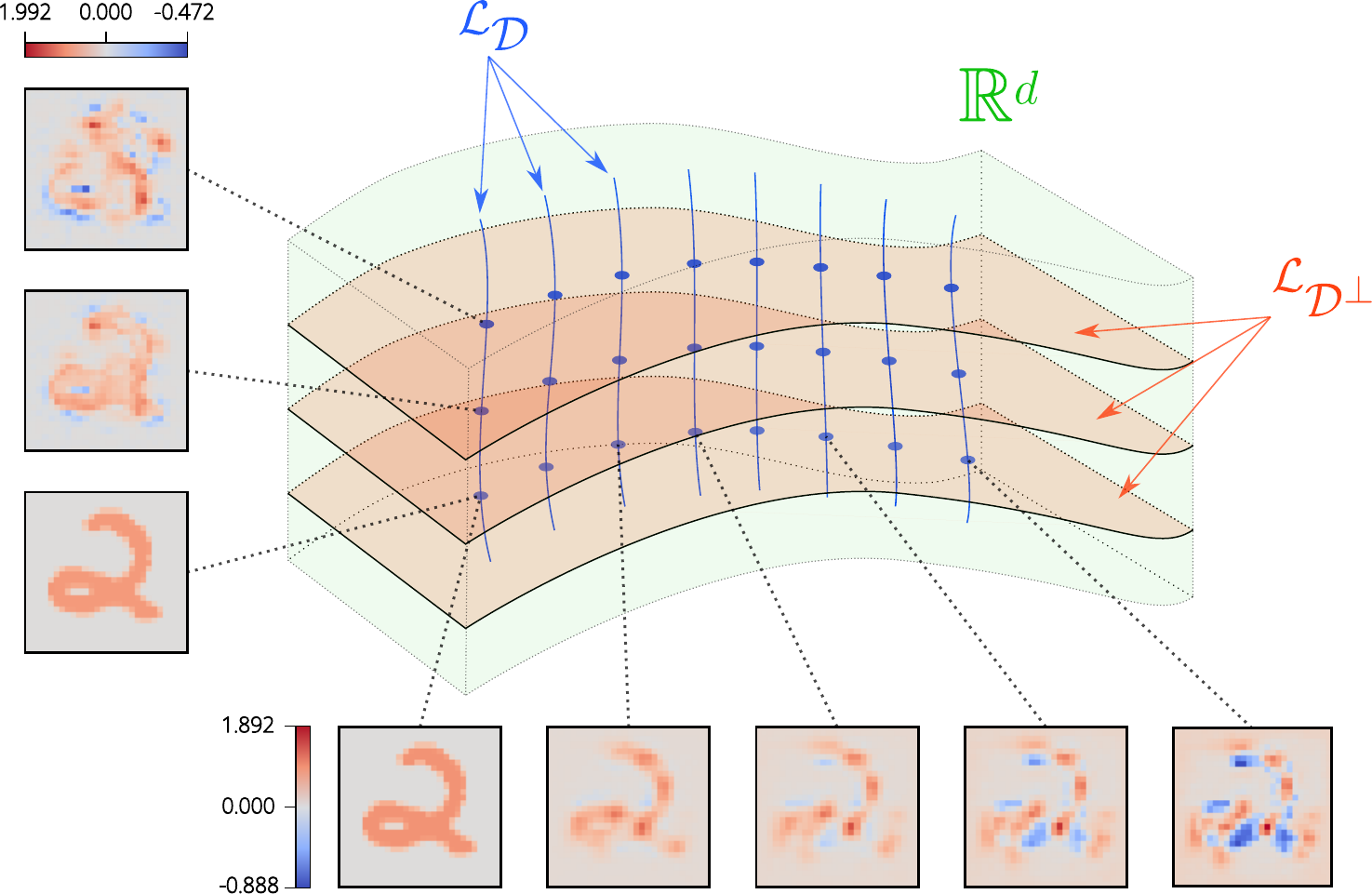}
\end{center}
\caption{Schematic illustration of foliations: $\cL_\cD$ and $\cL_{\cD^\perp}$
denote the set of the leaves in distributions $\cD$ and $\cD^\perp$ respectively. The images represent points in the data space when moving tangentially to $\cL_\cD$ (left) or to $\cL_{\cD^\perp}$ (bottom).}
\label{fol-constrk}
\end{figure}
{In this paper, we focus on the study of the distribution
$\cD$ defined in (\ref{distr-def}), strictly related with the DIM $D$, 
as defined in (\ref{dim}), since $\cD$ is equivalently spanned by
the columns of $D$ (see \cref{rk-d}).}
Under suitable regularity hypothesis, Frobenius theorem \cite{tu2008, tu2017}
provides with a characterization of integrable distributions.

\begin{theorem}[Frobenius Theorem]\label{thm:frobenius}
A smooth constant rank
distribution $\cC$ on a real manifold $M$ 
is integrable if and only if it is involutive,
that is for all vector fields $X,Y \in \cC$, $[X,Y] \in \cC$.
\footnote{If $X$, $Y$ are vector fields 
on a manifold, we define their Lie bracket as $[X,Y](f):=X(Y(f))-Y(X(f))$ for every real function $f$.
}
\end{theorem}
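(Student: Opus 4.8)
The plan is to prove the two implications of the equivalence separately; the forward implication is essentially formal, while the converse carries all the content.

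\emph{Integrable $\Rightarrow$ involutive.} Assume $\cC$ is integrable and let $X,Y$ be vector fields with $X_q,Y_q\in\cC_q$ for every $q\in M$. Fix $p\in M$ and let $N$ be the integral submanifold through $p$, so that $T_qN=\cC_q$ for all $q\in N$. Then $X$ and $Y$ are tangent to $N$, hence restrict to vector fields on $N$, and since the Lie bracket of two vector fields tangent to an immersed submanifold is again tangent to it, $[X,Y]_p\in T_pN=\cC_p$. As $p$ was arbitrary, $[X,Y]\in\cC$.

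\emph{Involutive $\Rightarrow$ integrable.} Integrability is a local property, so I fix $p\in M$, write $k=\dim\cC$, $n=\dim M$, and work in a coordinate chart $(U;x^1,\dots,x^n)$ centred at $p$. The first step is to normalize a local frame of $\cC$: after permuting the coordinates so that the projection $\cC_p\to\R^k$ onto the first $k$ slots is an isomorphism, shrinking $U$ yields a smooth frame $Y_1,\dots,Y_k$ of $\cC$ on $U$ of the form $Y_i=\partial/\partial x^i+\sum_{a=k+1}^n b_i^a\,\partial/\partial x^a$ with $b_i^a\in C^\infty(U)$. The key observation is that, since the coefficient of $\partial/\partial x^j$ in $Y_i$ equals $\delta_{ij}$ for $j\le k$, the bracket $[Y_i,Y_j]$ has all of its first $k$ components equal to $0$; but involutivity gives $[Y_i,Y_j]\in\cC=\Span(Y_1,\dots,Y_k)$, and the only combination of the $Y_i$ whose first $k$ components vanish is the zero one, so $[Y_i,Y_j]=0$ for all $i,j$.

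Next I would invoke the simultaneous rectification of commuting vector fields. Let $\theta^i_t$ be the local flow of $Y_i$ and let $\iota$ parametrize a small $(n-k)$-dimensional slice through $p$ transverse to $\cC_p$; then $(t^1,\dots,t^k,s)\mapsto\theta^1_{t^1}\circ\cdots\circ\theta^k_{t^k}(\iota(s))$ has invertible differential at the origin (its image spans $T_pM$ by transversality), hence is a diffeomorphism onto a neighbourhood of $p$, and, using that the $\theta^i$ pairwise commute because $[Y_i,Y_j]=0$, the coordinate vector fields $\partial/\partial t^i$ equal $Y_i$ for $i=1,\dots,k$. In these coordinates $\cC=\Span(\partial/\partial t^1,\dots,\partial/\partial t^k)$, so the slices $\{t^{k+1}=c^{k+1},\dots,t^n=c^n\}$ are integral submanifolds and $\cC$ is integrable near $p$. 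Finally, to upgrade this to the statement that $M$ is a disjoint union of leaves, one patches these plaques: the transition maps between two charts of the above type carry slices to slices, so ``joined by a finite chain of plaques'' is an equivalence relation on $M$ whose classes, with an appropriate smooth structure, are immersed submanifolds everywhere tangent to $\cC$ --- the leaves.

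The main obstacle I anticipate is the rectification step: one must verify that the flow-composition map is a local diffeomorphism and, more delicately, that the commutativity of the flows $\theta^i_t$ forces $\partial/\partial t^i=Y_i$ (this is where one slides $\theta^i_{t^i}$ past the other flows). By contrast, the frame normalization is linear algebra together with a smooth matrix inversion, the bracket computation is a one-line check, and the passage to global leaves is standard point-set bookkeeping. The constant-rank hypothesis is used exactly once, to guarantee that the local frame $Y_1,\dots,Y_k$ exists and remains a frame on a full neighbourhood of $p$. (Alternatively, one could dualize, describing $\cC$ locally as the common kernel of $1$-forms $\omega^1,\dots,\omega^{n-k}$ and using the equivalent involutivity condition $d\omega^i\wedge\omega^1\wedge\cdots\wedge\omega^{n-k}=0$ together with the Poincar\'e lemma to straighten them; this yields the same conclusion.)
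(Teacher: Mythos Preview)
The paper does not actually prove Frobenius' Theorem; it merely states it as a classical result and cites \cite{tu2008, tu2017} for the proof. There is therefore no ``paper's own proof'' to compare against. Your sketch is correct and is, in fact, essentially the textbook argument given in the references the paper cites (normalize a local frame so that its first $k$ components form the identity, use involutivity to conclude $[Y_i,Y_j]=0$, then straighten the commuting frame by composing flows). The one delicate step you identify---that commutativity of the $\theta^i$ forces $\partial/\partial t^i=Y_i$---is indeed the heart of the matter, and your description of how to handle it is accurate.
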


To illustrate this theorem, \cref{fig:xor-data-foliation}
%\ref{fol-1d} %and its importance in machine learning, we
shows two examples of 1-dimensional foliations, obtained
from the distribution $\cD$ defined in \cref{distr-def} of a %and \texttt{or} 
neural network trained on the \texttt{Xor} function, with GeLU\footnote{$\mathrm{GeLU}(x)=x\phi(x)$ with $\phi$ the cumulative distribution function for the Gaussian distribution.} and ReLU\footnote{$\mathrm{ReLU}(x)=\max(0,x)$.} non-linearities.
%\footnote{We look only at smooth points, which form an open set.}. % \cite{tron2022} 
%do not take into consideration the origin, 
%which is a singular point.}. 
Notice that, since in both cases $\cD$ is 1-dimensional, it is % they are
automatically integrable (the bracket of the vector fields is always zero).
\begin{figure}%[h!]
    \centering
    \begin{subfigure}{0.49\textwidth}
        \centering
        \includegraphics[width=\textwidth]{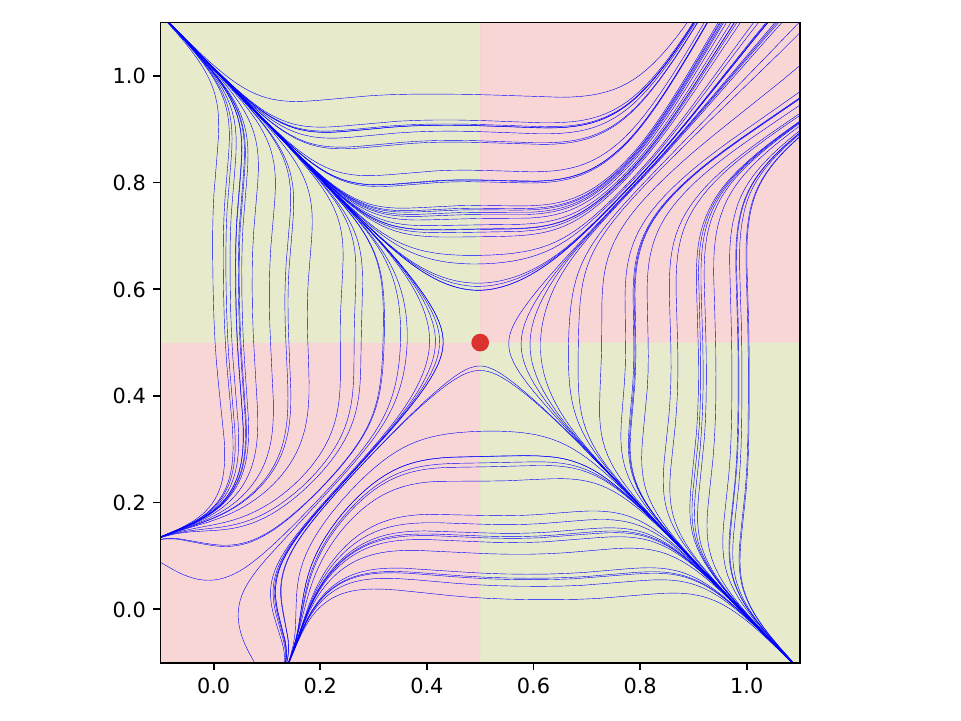}
        \caption{GeLU.}
    \end{subfigure}
    \begin{subfigure}{0.49\textwidth}
        \centering
        \includegraphics[width=\textwidth]{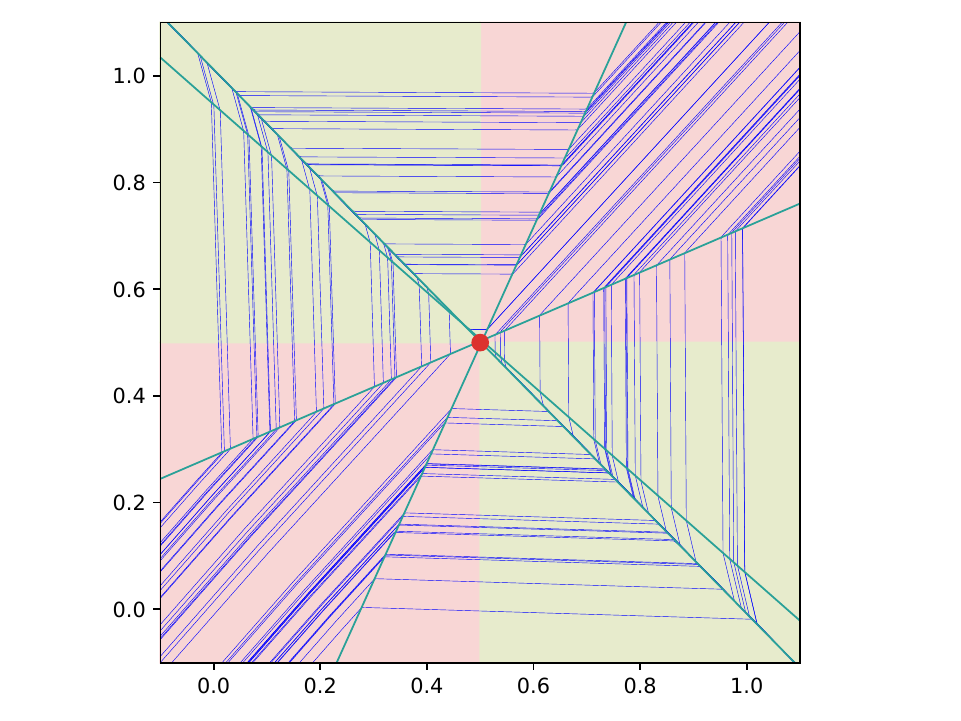}   
        \caption{ReLU.}
    \end{subfigure}
    \caption{{The blue lines are a sample of the {data} foliation defined by the distribution $\cD$ (\ref{distr-def}) for a 
    \texttt{Xor} network. The two classes of the \texttt{Xor} problem are represented in red and green squares underneath. The red dot is a singular point for the foliation. In (b), the green lines are the non-smooth points.}}
    \label{fig:xor-data-foliation}
\end{figure}
{%We are interested in the distribution $\cD$ as defined in
%(\ref{distr-def}).

In a more general setting, especially relevant for the
applications, we have the following result in dimension $d$ with $c\geq 2$.

\begin{proposition}\label{prop:fol}
Let the notation be as above.
For an empirical probability $p$ defined by the output (normalized by softmax) of a ReLU neural network the
distribution $\cD$ in the data space:
$$
x \mapsto \cD_x=\Span_{i=1, \ldots, c}\{
\nabla_x \ln p(y_i|x,w)\}
$$%as in (\ref{distr-def})
is locally integrable at smooth points.
\end{proposition}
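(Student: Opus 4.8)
The plan is to exploit the piecewise-affine structure of a deep ReLU network. By definition, a \emph{smooth point} $x_0$ admits a neighbourhood $U$ on which the whole network --- in particular the logit map $g$ with $z=g(x)=(z_1(x),\dots,z_c(x))$ --- is affine: there exist $A\in\R^{c\times d}$, with rows $a_1^{T},\dots,a_c^{T}$, and $b\in\R^{c}$ with $g(x)=Ax+b$ for $x\in U$. The class probabilities are the (log\nobreakdash-)softmax of the logits, $p_i(y\mid x,w)=e^{z_i(x)}/\sum_{j}e^{z_j(x)}$, as is standard for classifiers, so on $U$ each $\log p_i(y\mid x,w)$ is a genuinely smooth function of $x$ and $\cD$ is a smooth distribution there. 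I would open the proof by recording this reduction.

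Next comes the short computation the statement alludes to. For $x\in U$,
$$
\nabla_x\log p_i(y\mid x,w)=a_i-\sum_{j=1}^{c}p_j(y\mid x,w)\,a_j=:a_i-\bar a(x),
$$
so $\cD_x=\Span_{i=1,\dots,c}\{a_i-\bar a(x)\}$. The crux is that this span does not depend on $x$ on $U$. Since $\sum_j p_j(y\mid x,w)=1$ we have $a_i-\bar a(x)=\sum_j p_j(y\mid x,w)\,(a_i-a_j)\in W:=\Span\{a_i-a_1:i=2,\dots,c\}$, hence $\cD_x\subseteq W$; conversely $(a_i-\bar a(x))-(a_1-\bar a(x))=a_i-a_1$, so $W\subseteq\cD_x$. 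Therefore $\cD_x=W$ for every $x\in U$: restricted to $U$, $\cD$ is the constant-coefficient distribution $x\mapsto W$.

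Finally, the constant distribution $x\mapsto W$ is smooth, of constant rank $\dim W\le c-1$ (consistent with $\rank D(x,w)<c$), and involutive, because the Lie bracket of any two constant vector fields vanishes. By the Frobenius Theorem (Theorem~\ref{thm:frobenius}) it is integrable on $U$, its leaves being the affine slices $(x_0+W)\cap U$; hence $\cD$ is locally integrable at $x_0$, which is an arbitrary smooth point.

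The point to be careful about --- and the reason the conclusion is only local --- is that both $W$ and $\dim W$ depend on the activation region containing $x_0$; across the boundaries between regions (the non-smooth locus) the rank of $\cD$ can jump, which is exactly the singular behaviour analysed in the next section and precludes upgrading this to a global constant-rank foliation. A secondary subtlety worth spelling out is that the strict positivity $p_j(y\mid x,w)>0$ coming from the softmax is what forces $\cD_x$ to be \emph{all} of $W$ rather than a possibly smaller, $x$-dependent subspace; without it one would at best get a rank that may vary within a region, and integrability would have to be argued differently (e.g. trivially, as in the one-dimensional {\tt Xor} case).
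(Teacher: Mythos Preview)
Your proof is correct and follows essentially the same route the paper indicates: the paper defers the argument to \cite{gf2021} as ``a simple calculation based on the Frobenius Theorem,'' and your computation showing that on each activation region $\cD_x$ coincides with the \emph{constant} subspace $W=\Span\{a_i-a_1:i=2,\dots,c\}$ is exactly that calculation, making involutivity (and hence local integrability) immediate.
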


The proof of this proposition can be found in Appendix~\ref{app:proof_prop_fool}.
We call the foliation resulting from \cref{prop:fol}
%Whenever the distribution $\cD$ is integrable we call the resulting
%foliation 
the \emph{data foliation}. Its significance is clarified
by the following examples.
}

As an illustration of the distributions $\cD$ and $\cD^\perp$ 
{(orthogonal
computed with respect to the Euclidean metric in $M=\R^d$)}, in
MNIST, 
{we represent in \cref{fol-constrk} pictures of points along paths tangential to $\cD$ (left) or $\cD^\perp$ (bottom).}
% we present \cref{fol-D} and \cref{fol-Dperp}. We notice that, while {moving tangentially to} $\cD$ from one data point, the model predicts a meaningful label,
% while {moving tangentially to} $\cD^\perp$, the model maintains high confidence on the label,
% though not in line with the image. 
%even when it is not corresponding to the image.
%which is related to the actual image at each step;
%Notice that,
%in general, the images we go through at intermediate steps, do
%not belong to MNIST. %through steps none of the proposed
%images is in the MNIST datasets.
%In \cref{fol-Dperp}, on the other hand, we see
%that, while moving by taking steps in the distribution $\cD^\perp$,
%the model maintains high confidence on the label of the
%image in MNIST we start from, in this case the digit ``2'',
%though the displayed image at a given step appears
%very different from the digit ``2''.
% Both \cref{fol-D} and \cref{fol-Dperp} were
{These pictures were}
obtained by projecting the {\sl same} direction on $\cD$ {(left) or} $\cD^\perp$ {(bottom)} at each step.
% {Notice that the phenomena we observe
% above are due to the fact that the neural network output probabilities are invariant by moving in the kernel of the DIM, that is the distribution $\cD^\perp$.}
{Note that moving along a path in $\cL_{\cD^\perp}$ leaves the predicted probability and label invariants, while moving along a path in $\cL_{\cD}$ changes the predicted label to adapt to the new picture.}
%\footnote
%{(at each step in MNIST data is cropped to fit into the MNIST image range, leading at each step to possible leaf change, and direction change, though small).}
%, though small, given the step size.}

\medskip
For a model with GeLU non linearity however, one can see
experimentally that we do not have the involutivity property anymore
for the distribution $\cD$ (see \cref{table1}).
This can be understood as coming from the non-vanishing Hessian matrix of the GeLU function, as detailed in Appendix~\ref{app:proof_prop_fool}. 
%Similar experiments, both
%with ReLU and GeLU activation functions,
%show that $\cF$, the distribution defined via the Fisher matrix, though very
%similar to the $\cD$ distribution, %(see \cref{fig:xor-kernel-foliation} and 
%\cref{fig:xor-data-foliation})
%is not in general involutive for $N>>1$. 
Hence, 
there is no foliation whose leaves fill the data
space, naturally associated to {$\cD$} via Frobenius Theorem. To see this
more in detail,
%In \cref{table1} we report the dimensions of the
%space $\cD_x$ and 
we define the space $\mathcal{V}^D_x$ %and $\mathcal{V}^F_x$ generated
generated by $\cD_x$ %and $\cF_w$ respectively ($x \in \R^d$ data space,
%$w\in \R^N$ parameter space)
and the Lie brackets of their generators:
\begin{equation}\label{non-inv}
\begin{array}{c}
\mathcal{V}^D_x=\mathrm{span} \{\nabla_x \ln p(y_i|x,w),
[\nabla_x \ln p(y_j|x,w),
\nabla_x \ln p(y_k|x,w)], i,j,k=1,\dots, c\}\\ \\
%\mathcal{V}^F_w=\mathrm{span} \{\nabla_w \log p_i(y|x,w),
%[\nabla_w \log p_i(y|x,w),
%\nabla_w \log p_j(y|x,w)], i,j=1,\dots, c\}
\end{array}
\end{equation}
\begin{table}%[h!]
\caption{Involutivity of the distribution $\cD$\eliot{: dimension averaged on 100 points sampled in $\bra{0,1}^d$.}} %and $\cF$}
%Dimensions of  $\mathcal{V}^D_x$ and $\mathcal{V}^D_x$}
  \label{table1}
  \centering
  \begin{tabular}{ccc}
    \toprule
Non linearity & dim $\cD_x$  & dim $\mathcal{V}^D_x$ \\ 
%& dim $\cF_w$  & dim $\mathcal{V}^F_w$ \\ 
%dim $\cF_w$  & dim $\mathcal{V}^F_w$ &   \\
\toprule
    ReLU     & 9 & 9  \\ %&   &  \\
    GeLU     & 9 & 44.84   \\ % &  &  \\
    Sigmoid & 9 & 45 \\
    \bottomrule
  \end{tabular}
\end{table}
In \cref{table1} we report averages of the dimensions of the
spaces $\cD_x$, $\mathcal{V}^D_x$ % and $\cF_w$, $\mathcal{V}^F_w$
for a sample of 100 points $x \in \bra{0,1}^d$ \eliot{with a network trained on MNIST}.
%and 10 points $w \in \R^N$, both sample sets chosen randomly.
%We clearly see that, for %except the case of
%a model trained with GeLU, both $\cD$ and $\cF$
%are non involutive distributions. This can be clearly
The non involutivity of the distribution
is deduced from the fact that the dimension increases when we take the
space spanned by the distribution and the brackets of its generators.
%In particular, $\cF$ is non involutive for models trained either by ReLU or GeLU.
%The involutivity of $\cD$ for a model trained with RELU was proven
%in \cite{gf}.
As we can see, while for the ReLU non linearity
$\cD$ is involutive and we can define the data
foliation, the brackets of
vector fields generating $\cD$ do not lie in $\cD$ for the GeLU and sigmoid non linearities.
Consequently, there is no foliation and %for the Fisher matrix distribution $\cF$, 
the sub Riemannian
formalism appears more suitable to describe the geometry in this case. 
%of the parameter
%space. 
We shall not address the question here.
%, focussing on the
%data space and the DIM distribution $\cD$.

\subsection{Singular Foliations and Main Result}\label{sing-sec}
{In this section, we give our main result %the singular points of $\cD$,
namely \cref{thm:main}, regarding the singular points of $\cD$.}

{A foliation on a manifold $M$ is a
partition of $M$ into connected immersed submanifolds, called leaves. A foliation is 
\textit{regular} if the leaves have the same dimension, \textit{singular} otherwise.
%The manifolds forming the leaves $\cL_x$, $\cL_y$
%at different points $x, y \in M$ may have different dimensions.
Notice that the map $x \mapsto \mathrm{dim}(\cL_x )$, which associates to 
$x \in M$ the dimension of its leaf $\cL_x$, is lower semi-continuous, that is,
the dimensions of the leaves in a
neighbourhood of $x$ are greater than or equal to $\mathrm{dim}(\cL_x )$ \cite{lavau2017}. %In this case
Whenever we have an equality, we say that $x$ is \textit{regular}, otherwise
we call $x$ \textit{singular}. It is important to remark that, adhering to the
literature \cite{lavau2017}, the terminology {\sl singular} point
here refers to a point that has a neighbourhood where the leaves have
non constant dimension. %where the rank of the distribution drops.
We can associate a distribution to a foliation by associating to each point, the
tangent space to the leaf at that point. 
Such distribution has constant rank if and only if the foliation is regular. %Since 
Frobenius \cref{thm:frobenius} applies only to the case of constant
rank distributions,
%When an integral distribution $\cD$ on a real manifold $M$ does not have constant rank, we say that
%the corresponding foliation obtained via Frobenius Theorem
%\ref{thm:frobenius} is \textit{singular}. 
%we report here a 
however, there are results extending part of Frobenius theorem to non constant
rank distributions. In \cite{hermann63} and \cite{nagano66}, the authors give sufficient 
conditions for integrability in this more general setting 
(see \cite{lavau2017} for correct attributions on statements and proofs).

%\begin{theorem} \label{thm:herman}
%Let $\cC$ be a finite rank  smooth
%distribution $\cC$ on a real manifold $M$. If for all  
%$X,Y \in \cC$, $[X,Y] \in \cC$, then $\cC$ is integrable.
%%is integrable if and only if it is involutive,
%%that is for all vector fields 
%%\footnote{If $X$, $Y$ are vector fields
%%on a manifold, we define their Lie bracket as $[X,Y]:=XY-YX$.   
%\end{theorem}

In the practical applications under consideration, foliations may also have {\sl non-smooth} points,
that is \eliot{points where the distribution is not continuous, and thus at which the leaf} is not smooth. For example, \cref{fig:xor-data-foliation}
shows a \texttt{Xor} network with ReLU non linearity displaying both
singular and non smooth points.
%\footnote{Technically, the Frobenius Theorem
%can then be applied only to the set of smooth points, we comment on this
%important point later on.}. 

The singular point of each picture is in the center,
while non smooth points occur only for \texttt{Xor} with ReLU non linearity. As previously stated,
this is a degenerate case, where the involutivity and integrability
of $\cD$ is granted automatically by low dimensionality,
though for (b) in \cref{fig:xor-data-foliation},
we cannot define $\cD$ at non smooth points.

{{In many interesting applications involving neural networks with ReLU non linearity, the distribution $\cD$, generated by the image of the DIM $D$, is both non smooth and singular.} It then becomes important to understand where such
non smooth and singular points are located. The aim of this section is to show
they occupy a measure zero set in the dataspace, so that we can indeed apply
successfully Frobenius theorem in a large portion of the dataspace. In our next
section we shall investigate experimentally their location in dataspace and the importance in the knowledge transfer questions.}

%Hence, we now want to investigate further, more in general,
%the relation between singular 
%and non smooth points. We will show that {\color{red}for distributions coming
%through CNN classifiers,} singular points are non smooth
%and the non smooth points form a set of measure zero in the data space.

\begin{remark}
Notice that Frobenius \cref{thm:frobenius} and its non constant rank
counterparts, %\cref{thm:herman} 
apply only in the smoothness hypothesis, while for applications, i.e. the case of ReLU
networks, it is necessary to examine also non smooth foliations.
%{\color{purple}however
%our work takes place in the generic non-smooth case of a ReLU network. }
We plan to explore the non smooth setting more generally in a forthcoming paper.
\end{remark}

We now want to investigate further the singular points of the distribution $\cD$.
%learning foliation,
%that is for the foliation associated with the integrable distribution $\cD$.

Let $p:=N(x)=(p(y_1|x,w), \dots , p(y_c|x,w))$ denote the output of the neural network classifier and 
$J_p(x)=(\nabla_x p(y_i|x,w))$ its Jacobian matrix. So $\nabla_x p(y_i|x,w)$ is the $i$-th row 
(column) of $J_p(x)$.
\eliot{In the following, we might omit the $x$ for shorter computations.}
One can see, by the very definition of $\cD$ (\ref{distr-def}), that $\rank \cD=\rank J_p$.
We assume $w$ to be constant, that is we fix our model. 
\eliot{
Let $P$ be the diagonal matrix defined by:
\[
				P :=\pa{
                \begin{matrix}
								p_1 & & \\
								& \ddots &  \\
								& &  p_C
				\end{matrix}
                }
.\] 
}

\eliot{Recall that} $N(x)=p(y|x,w)=\mathrm{Softmax}\circ S(x)$
{where $S(x)$ represents the score.}
Then we have:
\beq\label{jac1}
J_p(x)=(P-p^tp)J_S(x)
\eeq
where $J_S(x)$ is the Jacobian of the score.

To study the drop of rank for $\cD$ or equivalently for $J_p$, let us first look at the kernel 
of $\left(P-p^tp\right)$.

\begin{lemma}\label{lem:ker_P-pp}
    Let $E_i$ denote the vector with the $i$-th coordinate equal to one, and the others equal to zero.
    \beq
    \ker(P-p^tp)=\Span\{ (1,\dots, 1) \} + \Span\{E_i,~\forall i \text{ such that } p_i=0\}
    \eeq
\end{lemma}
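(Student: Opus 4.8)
The plan is to make the operator $P-p^tp$ act on a generic vector and read off the kernel condition directly; everything reduces to elementary linear algebra, the only structural input being that $p$ is a probability vector.

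First I would fix $v=(v_1,\dots,v_c)^t\in\R^c$ and compute that the $i$-th coordinate of $(P-p^tp)v$ equals $p_iv_i-p_i\langle p,v\rangle=p_i(v_i-\langle p,v\rangle)$, where $\langle p,v\rangle:=\sum_j p_jv_j$. Consequently $v\in\ker(P-p^tp)$ if and only if $p_i(v_i-\langle p,v\rangle)=0$ for all $i$, equivalently $v_i=\langle p,v\rangle$ for every $i$ in the support $S:=\{i:p_i>0\}$, with $v_i$ unconstrained for $i\in Z:=\{i:p_i=0\}$.

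Next I would show that this characterization matches the claimed description. If $v$ satisfies it, set $\lambda:=\langle p,v\rangle$; then $v_i=\lambda$ for all $i\in S$, and a short consistency check, $\langle p,v\rangle=\sum_{i\in S}p_i\lambda=\lambda$ using $\sum_{i\in S}p_i=1$ and $S\neq\emptyset$, shows that every value of $\lambda$ is admissible. Hence $v=\lambda(1,\dots,1)+\sum_{i\in Z}(v_i-\lambda)E_i$, which gives the inclusion $\ker(P-p^tp)\subseteq\Span\{(1,\dots,1)\}+\Span\{E_i:p_i=0\}$. For the reverse inclusion I would simply verify that the spanning vectors lie in the kernel: the $i$-th coordinate of $(P-p^tp)(1,\dots,1)^t$ is $p_i(1-\sum_jp_j)=0$, and for $i\in Z$ every coordinate of $(P-p^tp)E_i$ carries a factor $p_i=0$.

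The computations are routine, so I do not expect a genuine obstacle; the one place deserving care is the consistency step, where the normalization $\sum_ip_i=1$ together with the nonemptiness of the support $S$ is used — precisely what makes the common value $\lambda$ on $S$ free and forces it to equal $\langle p,v\rangle$ automatically.
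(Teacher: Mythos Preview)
Your proof is correct and follows essentially the same approach as the paper: both compute the $i$-th coordinate of $(P-p^tp)v$ as $p_i(v_i-\langle p,v\rangle)$ and split into the cases $p_i=0$ and $p_i\neq 0$. Your treatment of the inclusion $\subseteq$ is in fact slightly more direct than the paper's, which uses an argmax argument to force equality of the coordinates on the support, whereas you simply observe that the common value is $\langle p,v\rangle$ itself.
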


\begin{proof}
%\cref{app-sec}.    
Let $u \in \R^c$. Then $u \in \ker(P-p^tp)$ if and only if 
$p_i u_i - p_i \sum_k p_k u_k = 0$. This is equivalent to:
$$
p_i = 0 \quad \text{ or } \quad u_i - \sum_k p_k u_k = 0
\iff  p_i = 0 \quad \text{ or } \quad \sum_k p_k (u_i - u_k) = 0
$$
%    \begin{align*}
%        \\ & ((P-p^tp)u)_i = 0\\
%        \iff & p_i u_i - p_i \sum_k p_k u_k = 0 \\
%        \iff &  p_i = 0 \quad \text{ or } \quad u_i - \sum_k p_k u_k = 0 \\
%\iff &  p_i = 0 \quad \text{ or } \quad \sum_k p_k (u_i - u_k) = 0 \\
%    \end{align*}
    The inclusion $\supseteq$ is thus straightforward. To get the other inclusion, let $i_0$ denote the argmax of $u$. Then, $\sum_k p_k(u_{i_0} - u_k)$ is a sum of non negative terms; hence to be equal to zero, must be $p_k(u_{i_0} - u_k) = 0$ for all $k$. Therefore, $u_k = u_{i_0}$ for all $k$ such that $p_k \neq 0$. This is enough to prove the direct inclusion $\subseteq$.
\end{proof}

{ We have the following important observation, that we shall explore more
in detail in our experiments.

\begin{observation}\label{obs-sing}
\cref{lem:ker_P-pp} tells us that the rank of the distribution $\cD$
or equivalently of $J_p(x)=(P-p^tp)J_S(x)$ is lower at points in the data space
where the probability distribution has higher number of $p_i = 0$.
%With Softmax function this never happens, however in practical situations,
%when a probability $p_i<<1$, we observe a drop in rank.
Clearly the points in the dataset, on which our model is trained, are precisely the
points where the empirical probability $p$ is mostly resembling a mass probability distribution.
Hence at such points, we will observe empirically
an average drop of the values of the 
{eigenvalues of the DIM (whose columns generate $\cD$), 
compared to random points in the data space},
as our experiments confirm in \cref{exp-sec}. As we shall see, this 
property characterizes the points in the dataset the model was trained with.
\end{observation}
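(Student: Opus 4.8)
The statement has a provable algebraic core and an empirical tail, and my plan separates the two. First I would convert Lemma~\ref{lem:ker_P-pp} into a rank count. The lemma gives $\dim\ker(P-p^tp)=1+\#\{i:p_i=0\}$, so $\rank(P-p^tp)=\#\{i:p_i\neq 0\}-1$. Combined with the factorization (\ref{jac1}) and the identity $\rank\cD=\rank J_N$ already recorded in the text, this yields $\rank\cD=\rank J_N\le\#\{i:p_i\neq 0\}-1$. Since by (\ref{thm:ker}) the DIM satisfies $\ker D=\cD^\perp$, the number of nonzero eigenvalues of $D$ equals $\dim\cD=\rank\cD$, hence is itself at most $\#\{i:p_i\neq 0\}-1$. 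The operative quantity is therefore the number of \emph{vanishing} probabilities: the rank collapses exactly when $p$ concentrates on few classes, and is largest (up to $c-1$) when $p$ is spread out. This is precisely the content the second paragraph needs, since the mass-like distributions produced on training data have many $p_i\approx 0$.

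The exact lemma refers to $p_i=0$, whereas at genuine data points one only has $p_i\approx 0$, so the rank does not literally drop and the honest object is the \emph{magnitude} of the eigenvalues. To reach them I would rewrite the DIM using the softmax hypothesis $N=\mathrm{Softmax}\circ S$. Then $\nabla_x\log p_i=\nabla_x S_i-\bar S$ with $\bar S=\sum_j p_j\nabla_x S_j$, and the defining expectation (\ref{dim}) becomes the $p$-weighted covariance of the score gradients,
\[
D(x,w)=\sum_{i=1}^c p_i\,(\nabla_x S_i-\bar S)(\nabla_x S_i-\bar S)^t .
\]
In particular $\mathrm{tr}\,D=\sum_i p_i\|\nabla_x S_i-\bar S\|^2$, and since $D$ is positive semidefinite every eigenvalue is bounded above by this trace. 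The claim about the spectrum is thus reduced to controlling a single weighted variance.

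Next I would let $p$ approach a point mass $E_{i_0}$. Then $\bar S\to\nabla_x S_{i_0}$, so the active term $p_{i_0}\|\nabla_x S_{i_0}-\bar S\|^2\to 0$, while each inactive term $p_i\|\nabla_x S_i-\bar S\|^2\to 0$ because $p_i\to 0$; hence $\mathrm{tr}\,D\to 0$ and, by the previous bound, every eigenvalue of $D$ vanishes in the limit. Identifying training points with such mass-like distributions, a trained classifier being confident on its own training data, then gives the asserted average drop of the DIM spectrum relative to generic points, where $p$ is more uniform and several covariance terms survive.

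The main obstacle is in making the last two steps rigorous rather than suggestive. For the eigenvalue collapse one needs the score gradients $\nabla_x S_i$ to remain bounded (or $S$ Lipschitz) on the region considered, since otherwise a small weight $p_i$ could multiply a large gradient; this hypothesis must be stated explicitly, and it is the analytic replacement for the purely algebraic rank drop. The deeper difficulty is the final sentence: the chain above proves ``mass-like $\Rightarrow$ low eigenvalues'', but the converse, that a depressed DIM spectrum singles out training points, does not follow from Lemma~\ref{lem:ker_P-pp} and is not a theorem. That direction is genuinely empirical, and my proposal is to present it as such, deferring its justification to the experiments of Sec.~\ref{exp-sec} rather than claiming a deductive proof.
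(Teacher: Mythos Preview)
Your proposal is correct and in fact supplies an argument the paper does not give. In the paper this is an \emph{Observation} with no accompanying proof: it simply points to Lemma~\ref{lem:ker_P-pp} for the rank claim and defers the spectral statement to the experiments of Sec.~\ref{exp-sec}. Your covariance identity $D(x,w)=\sum_i p_i(\nabla_x S_i-\bar S)(\nabla_x S_i-\bar S)^t$ and the ensuing $\mathrm{tr}\,D\to 0$ limit as $p\to E_{i_0}$ (under bounded score gradients, which you rightly isolate as the needed hypothesis) constitute a genuine analytic bridge from the exact kernel statement ($p_i=0$) to the approximate softmax regime ($p_i\approx 0$); the paper leaves this passage implicit. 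You are also correct to flag the converse direction---low DIM spectrum characterizes training points---as empirical rather than deductive; the paper treats it the same way. One minor point worth noting: the first sentence of the Observation as printed (``higher number of $p_i\neq 0$'') reads opposite to what Lemma~\ref{lem:ker_P-pp} actually yields and to what the second paragraph uses; you have silently corrected this to the intended meaning, namely that mass-like $p$ (many $p_i\approx 0$) forces the rank and the eigenvalues down.
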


}

In our hypotheses, since the probability vector $p$ is given by a Softmax function, it cannot have null coordinates. Therefore, \cref{lem:ker_P-pp} states that $\dim \ker (P-p^tp) = 1$ and that the kernel of $P-p^tp$ does not depend on the input $x$.
Thus, the drops in rank of $\cD$ does not depend on $\ker (P-p^tp)$ and can only be caused by $J_S(x)$. 
%Besides, the Softmax being a smooth function, the non-smooth points can also only be caused by $J_S(x)$.

Now we assume that the score $S$ is a composition of linear layers and activation functions as follows:
\beq\label{s-expr}
S(x)=L_{W_\ell} \circ \sigma \circ \dots \circ \sigma \circ L_{W_1}
\eeq
where $\sigma$ is the ReLU non linearity, $L_{W_i}$ are linear layers (including
bias) and $\ell$ is the total number of linear layers.
%We also introduce the following notation. 
We denote the output of the $k$-th layer:
\[
f_k(x) = L_{W_k}\circ \sigma \circ \dots \circ \sigma \circ L_{W_1}(x)
\]
Let us define, for a subset $U$ in $\R^d$:
\begin{equation}
    Z_U = \left\{ x\in U \text{ such that } \exists i,~ x_i=0\right\}
\end{equation}

\begin{lemma} \label{lem:rk-js}
% Let $\mathcal{O}$ be the set of points $x$ in $M=\R^d$, for which the score function $S$ is not constant on any neighbourhood of $x$.
%$\mathcal{O}=\left\{x\mid \forall \textmd{nbhd}(x),~ S \text{ is not constant on it.}\right\}$ 
The set of singular points of $J_S$, the Jacobian of $S$,  is a subset of:
%the following:
    \begin{equation}
        \bigcup_{k=1}^{\ell-1} f^{-1}_k (Z_{f_k(M)})%\cap \mathcal{O}
        = \bigcup_{k=1}^{\ell-1} \bigcup_{i=1}^{\dim f_k(M)} \left\{x\mid f_k(x)_i = 0\right\}%\cap\mathcal{O}
    \end{equation}
    This set is the finite union of closed null spaces, %. It is thus a closed null space
 thus of zero Lebesgue measure.
\end{lemma}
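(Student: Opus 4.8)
The plan is to analyze how the rank of $J_S(x)$ can drop by tracking the piecewise-linear structure of a ReLU network. On each maximal region where all ReLU activations have fixed sign pattern, the score $S$ is an affine map, so $J_S$ is locally constant there; hence a point can only be singular for $J_S$ if it lies on a boundary where some activation pattern changes, i.e.\ where some pre-activation coordinate vanishes. The pre-activation at layer $k+1$ is $L_{W_{k+1}}\circ\sigma(f_k(x))$, and the ``kink'' set of the $\sigma$ applied after $f_k$ is precisely $\{x : f_k(x)_i = 0 \text{ for some } i\}$. Running $k$ from $1$ to $\ell-1$ (the last layer $L_{W_\ell}$ is affine and contributes no new kinks, and there is no ReLU before $f_1$), one gets that every singular point of $J_S$, intersected with $\mathcal{O}$, lies in $\bigcup_{k=1}^{\ell-1} f_k^{-1}(Z_{f_k(M)})$. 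The intersection with $\mathcal{O}$ is needed because on a region where $S$ is constant the Jacobian is identically zero and ``singularity'' is vacuous/degenerate there; restricting to $\mathcal{O}$ removes that pathology, so that on the complement of the kink set inside $\mathcal{O}$ the local affine piece is genuinely non-constant and $J_S$ attains its generic (maximal) rank.

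The second step is the measure-theoretic conclusion. Each set $\{x : f_k(x)_i = 0\}$ is the preimage under the continuous (indeed piecewise-affine) map $x \mapsto f_k(x)_i$ of the closed point $\{0\}$, hence closed. Moreover $f_k(x)_i$ is a nonconstant piecewise-affine function on each activation region (nonconstant once we have intersected with $\mathcal{O}$, or else we simply discard the regions where it is constant, since there its zero set is either empty or already contained in lower-dimensional kink sets), so its zero set meets each region in an affine hyperplane slice, a Lebesgue-null set. There are finitely many activation regions (a ReLU network has finitely many linear pieces), and finitely many pairs $(k,i)$, so the whole union is a finite union of closed sets each of Lebesgue measure zero, hence closed and null. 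That gives exactly the displayed equality and the final ``finite union of closed null spaces, thus of zero Lebesgue measure'' claim.

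The main obstacle is making the identification $f_k^{-1}(Z_{f_k(M)}) = \bigcup_i \{x : f_k(x)_i = 0\}$ and the ``singular $\Rightarrow$ on a kink'' implication fully rigorous in the presence of the image $f_k(M)$ possibly being lower-dimensional than the ambient space of layer $k$: the set $Z_{f_k(M)}$ is defined relative to coordinates of the ambient space, and one must check that a coordinate of $f_k(x)$ vanishing is indeed what triggers a change in the downstream activation pattern — equivalently that the only nonsmooth locus of $S$ comes from these coordinate hyperplanes pulled back through the $f_k$. This is where one uses that $\sigma$ acts coordinatewise and is smooth (affine) away from $0$ in each coordinate, so away from $\bigcup_{k,i}\{f_k(x)_i=0\}$ the composition $S$ is a composition of smooth maps and $J_S$ is continuous and locally constant, hence cannot be singular at a point of $\mathcal{O}$ where it generically has full rank. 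I would also be slightly careful about whether ``singular point of $J_S$'' means a rank drop relative to the generic rank on a neighborhood; adopting that definition (consistent with the foliation terminology fixed earlier in Sec.~\ref{sing-sec}), the argument above shows every such point lies in the stated union, and the union is visibly a finite union of closed null sets.
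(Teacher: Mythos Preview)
Your proposal is correct and tracks the paper's argument closely. The paper writes out the chain-rule factorization $J_S(x)=W_\ell\, J_\sigma(f_{\ell-1}(x))\,W_{\ell-1}\cdots J_\sigma(f_1(x))\,W_1$ and observes that each $J_\sigma$ is locally constant away from the coordinate zeros, which is precisely your activation-region argument rendered algebraically; both then conclude that rank jumps occur only on the kink set $\bigcup_{k,i}\{f_k(x)_i=0\}$. The one noticeable difference is in dispatching the degenerate case where some $(f_k)_i$ is identically zero on an open region: the paper handles it by an explicit genericity hedge (``even if it means making infinitesimal changes to the network weights'') to force $(f_k)_i$ to be locally affine and nontrivial near any point of $\mathcal{O}$, whereas you try to absorb it via the restriction to $\mathcal{O}$. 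Your phrasing that in the constant case the zero set is ``already contained in lower-dimensional kink sets'' is not literally correct (if $(f_k)_i\equiv 0$ on an open region the zero set \emph{is} that region), but what you actually need, and what your argument does deliver, is that such a region contributes no \emph{rank changes} of $J_S$ in its interior; the paper's weight-perturbation simply sidesteps this bookkeeping.
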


\begin{proof} 
A short calculation based on the expression of $S$ (\ref{s-expr})
gives:
\beq
\begin{array}{lcl}
    J_S(x) & = & W_\ell J_\sigma\left(f_{\ell - 1 }(x)\right)W_{\ell - 1} J_\sigma \left(f_{\ell - 2}(x) \right) \dots J_\sigma(f_1(x)) W_1 
\end{array}\label{eq:jacS}
\eeq
Notice that:
\begin{equation}
    \left(J_{\text{ReLU}} (x)\right)_{i,j} = 
    \begin{cases}
        \delta_{i,j} & \text{if } x_i > 0 \\
        0 & \text{if } x_i < 0
    \end{cases}
\end{equation}

Hence, the set $Z_U$ represents the singular points of $J_{\text{ReLU}}$ on the domain $U$.

\eliot{First, note that $f_k$ are piecewise affine functions, \textit{i.e.} divided into a finite number of region on which $f_k$ is affine. Let us focus on one of these regions. If $(f_k)_i$ is non-constant, then the equation $f_k(x)_i = 0$ defines a hyperplane of dimension $< d = \dim M$ as the kernel of a non-constant affine function. If $(f_k)_i$ is constant, it is either equal to a non-zero value and then the set $\{x\mid f_k(x)_i = 0\}$ is empty; or $(f_k)_i$ is constant equal to zero.
This last case may be problematic as it implies that a whole region (with positive measure) could consist of degenerate and non-smooth points. However, note that this only occurs for a specific value of the bias $(b_k)_i$. This is occurring with probability zero after training\footnote{To elaborate on this argument, even if this situation were to occur after training, one could escape from it by adding $\varepsilon>0$ to $(b_k)_i$, thereby breaking the problematic equality. With a sufficiently small $\varepsilon$, the network's predictions should remain close enough to the original ones by continuity.}, therefore we can safely discard this case.} 
% If $x_0\in\left\{x\mid f_k(x)_i = 0\right\}\cap\mathcal{O}$, then, 
% either there exists a neighborhood of $x_0$ such that $(f_k)_i$ is linear on it, or 
% $x_0$ is on the  
% even if it means making infinitesimal changes to the network weights, there exists a neighborhood of $x_0$ such that $(f_k)_i$ is linear on it.
% Besides, since $x_0\in \mathcal{O}$, again,
% even if it means making infinitesimal changes to the network weights, then $(f_k)_i$ will not be trivial. Thus $\left\{x\mid f_k(x)_i = 0\right\}\cap\mathcal{O}$ is 
\eliot{In conclusion, the set of singular points of $J_S$ is contained in a union of hyperplanes with dimension $< d=\dim M$.} %\commentr{E. to be finished}
\end{proof}

%Notice that some usual machine learning layers (Maxpool, Averagepool,...) don't have modifiable weights. Therefore, the previous proof requires more investigations for such other layers.

%\begin{remark}\label{rem-ker}
%{\color{red}Notice that $\rank(P-p^tp)$ is controlled by $\ker(P-p^tp)$ which however for a partially trained
%network can have some drop near a data point, since at these points the network is assigning a high
%probability to the correct class and almost zero probability for the rest. This is shown clearly in
%the \cref{fig:evalues1}, and forms the backbone of our arguments that allow us to get information on the database on
%which the neural network is trained starting from just the model (see also \cref{table2}).}
%\end{remark}

%Our next proposition shows the following important result:
Now we see that %for ReLU neural networks, 
singular points %are non smooth {\color{red} and they 
occur on a (Lebesgue) measure zero set \rita{given by the union of the
hypersurfaces of equation $(f_k)_i=0$, which is a piecewise linear condition (see proof
of \cref{lem:rk-js}).} % \cite{}.
%points of the distribution $\cD$ as in \ref{distr-def}coincide 
%clarifies the behaviour of the rank of $\cD$.

\begin{theorem}\label{thm:main}
Let the notation be as above.
%Let $w \in \R^N$ be the weights of a deep ReLU neural network classifier, $p$ the empirical probability given by Softmax. Then, the singular points of 
Consider the distribution $\cD$:
\beq
\R^d \ni x\mapsto \cD_x=\mathrm{span}\{\nabla_x p(y_i|x,w), \, i=1, \dots c\}
\eeq
where $p$ is an empirical probability given by Softmax and a score
function $S$ consisting of a sequence of linear layers and ReLU activations.
Then, %$\cD$ %is integrable and 
its singular points
(i.e. points where $\cD$ changes its rank) 
are a closed null subset of $\R^d$ contained in a union
%of $\ell$
hypersurfaces.
%, where $\ell$ is the number of layers. 
%Moreover, these are also the points
%where $\cD$ is non smooth.
%is smooth in an open set.
\end{theorem}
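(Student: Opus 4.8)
The plan is to reduce the rank analysis of $\cD$ to that of $J_S$, and then invoke Lemma~\ref{lem:rk-js}. First I would recall from the discussion preceding the theorem that $\rank \cD_x = \rank J_N(x)$ and that $J_N(x) = (P - p^tp) J_S(x)$. Since $p$ is a Softmax output, all coordinates $p_i$ are strictly positive, so by Lemma~\ref{lem:ker_P-pp} the matrix $P - p^tp$ has a kernel of constant dimension one (spanned by $(1,\dots,1)$) that is independent of $x$. Consequently $\rank J_N(x) = \rank\big((P-p^tp) J_S(x)\big)$ differs from $\rank J_S(x)$ only through the fixed, $x$-independent one-dimensional kernel of $P - p^tp$: more precisely, left-multiplication by a matrix whose kernel never meets (except trivially) the relevant column space cannot create $x$-dependent rank drops. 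So the map $x \mapsto \rank \cD_x$ has the same jump locus as $x \mapsto \rank J_S(x)$, and the singular points of $\cD$ are contained in the singular points of $J_S$.

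Next I would apply Lemma~\ref{lem:rk-js}: the singular points of $J_S$ are contained in $\bigcup_{k=1}^{\ell-1} f_k^{-1}(Z_{f_k(M)}) \cap \cO$, which that lemma identifies as a finite union of pieces of hyperplanes $\{x \mid f_k(x)_i = 0\}$, hence a closed set of Lebesgue measure zero. Counting the hyperplanes across all $k$ and all coordinates $i$ of each intermediate layer gives a finite collection; bundling them appropriately, one can phrase the containment inside a union of at most $\ell$ hypersurfaces (each hypersurface being the zero set of one piecewise-linear pre-activation function, which is itself a union of affine pieces), matching the statement. The restriction to $\cO$ — points admitting a neighbourhood where $S$ is non-constant — is exactly what Lemma~\ref{lem:rk-js} needs to guarantee these zero sets are genuinely lower-dimensional rather than all of $M$; on the complement of $\cO$, $S$ is locally constant, so $\cD$ is locally zero and rank is locally constant there, contributing no singular points.

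The main obstacle I anticipate is making the first reduction fully rigorous: one must argue that multiplying $J_S(x)$ on the left by the fixed singular matrix $Q := P - p^tp$ does not introduce spurious $x$-dependent rank behaviour. The clean way is to note $\ker Q = \rspan\{(1,\dots,1)\}$ is independent of $x$, so $\rank(Q J_S(x)) = \rank J_S(x) - \dim\big(\ker Q \cap \mathrm{Im}\, J_S(x)\big)$; one then needs that $\dim(\ker Q \cap \mathrm{Im}\, J_S(x))$ is locally constant off the singular locus of $J_S$. In fact, since $\rank J_N < c$ and the relevant column relations are governed by the same affine-region structure of the ReLU network, the jump set of $x \mapsto \rank J_N(x)$ is contained in the jump set of $x \mapsto \rank J_S(x)$; I would spell this out using the local-linearity of $S$ on each ReLU activation region and the lower semicontinuity of rank, so that any rank drop of $J_N$ forces a rank drop of $J_S$, which by Lemma~\ref{lem:rk-js} can only happen on the stated hypersurfaces.

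Finally I would assemble the conclusion: the singular set of $\cD$ is contained in a finite union of zero sets of piecewise-affine functions, each such zero set being closed and contained in a finite union of affine hyperplanes of dimension $d-1$, hence a closed Lebesgue-null subset of $\R^d$; grouping by layer gives the bound of $\ell$ hypersurfaces. This gives both the measure-zero claim and the geometric description, which is what the theorem asserts.
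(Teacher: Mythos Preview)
Your proposal is correct and follows essentially the same approach as the paper: reduce the rank analysis of $\cD$ to that of $J_S$ via Lemma~\ref{lem:ker_P-pp} (using that Softmax has no zero coordinates so $\ker(P-p^tp)$ is the constant line $\rspan\{(1,\dots,1)\}$), then invoke Lemma~\ref{lem:rk-js}. In fact you are more careful than the paper about the reduction step, correctly pinning down that on each ReLU activation region $J_S$ is constant so $\rank J_N$ is locally constant there; just be aware that $Q=P-p^tp$ is \emph{not} itself fixed (only its kernel is), so your phrase ``fixed singular matrix $Q$'' is a minor slip to clean up.
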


\begin{proof} 
%The integrability of $\cD$ is a consequence of \cref{thm:herman} and
%the result in \cite{gf2021}. 
The 
singular points of $\cD$ coincide, by \cref{lem:ker_P-pp} with
the points where $J_S$ the Jacobian of $S$ as in (\ref{s-expr}) changes its
rank. By \cref{lem:rk-js} this occurs in a union of
%$\ell$
hypersurfaces, \rita{given by the equations $(f_k)_i=0$ (see proof of \cref{lem:rk-js})}.
\end{proof}

%We are going to see in our next section regarding the experiments a better and visual elucidation
%of the properties regarding the non constant and non smooth distribution $\cD$.

%We conclude by observing that, as 
As a consequence of the proofs of \cref{lem:rk-js}
and \cref{thm:main}, the singular points of the distribution $\cD$ are contained 
in the non smooth points and such points are contained in a measure zero subset of the data space.
Hence, if we restrict ourselves to the open set complementing such measure zero set, 
we can apply Frobenius Theorem {\ref{thm:frobenius}
to the distribution $\cD$ to get a foliation. Since
it is of significance in the dataspace, as we are going to elucidate in the next section, we
call such foliation the \textit{data foliation}.}

{ \begin{remark} \cref{thm:main} holds also in the more general hypothesis
of piecewise linear activation functions, in place of ReLU.\end{remark}}

{\begin{observation}\label{obs-sing2}
There is empirical evidence \cite{Hanin2019ComplexityOL} that, removing the measure zero
set of non smooth points as in \cref{thm:main}, partitions the dataspace into disjoint connected components. In each component the distribution $\cD$ has constant
rank, hence the foliation restricted to one component is regular. However the global
foliation on the dataspace, granted locally by Frobenius Theorem
\ref{thm:frobenius}, may vary its rank as we move from one connected component to
another, as the experiments in our next section suggest. Hence the notion of singular
foliation, i.e. a foliation with leaves of different dimensions, become essential to properly model the
dataspace, in which we train the neural network.
%Write observation here clarifying the non singular foliation is the right
%mathematical object as disjoint union of regular foliations they may have different
%rank.
\end{observation}}

\subsection*{Interpretation of Singularities.}
The dimension of a leaf from the data foliation indicates the number of degrees of freedom available to modify the output prediction by adjusting the input.
When the rank of $\cD_x$ (or equivalently $D(x)$) is lower than $c-1$, then the changes to the prediction are constrained.
For instance, if the rank of $\cD_x$ is only $1$, then any local change to the input will produce a displacement along a one dimensional path in the output space.
A singular point is then where leaves of different dimensions meet in the input space, adding (or removing in the opposite direction) degrees of freedom to the output's modifications.

\section{Experiments}\label{exp-sec}

{In this section we look at some experiments to elucidate the role
of the singular and non-smooth points (see \cref{thm:main}) for the distribution
$\cD$, generated by the image of the DIM $D$ (see Def. (\ref{dim})),
in knowledge transfer questions. The emergence of a singular 
foliation structure in an dense open set of the dataspace, granted by \cref{thm:main},
via Frobenius \cref{thm:frobenius} (see also \cref{obs-sing2})
is related with the inner structure of datasets in such dataspace.
Indeed, we will show that an effective way to measure the "distance" between
different datasets embedded in the same dataspace, 
is related with the rank of $\cD$, or equivalently $D$,
and the (relative) magnitude of $D$'s eigenvalues.}
%\subsection{The model and the datasets}

\medskip
We perform our experiments on the following datasets: MNIST~\cite{lecun1998}, 
%dataset (\cite{lecun1998}), CIFAR10 dataset (\cite{kriz2010})
%and some variations of them. Namely we take into exam 
Fashion-MNIST~\cite{fashionmnist}, KMNIST~\cite{kmnist} and EMNIST~\cite{emnist}, letters only, that we denote with Letters. 
We also create a dataset that we call
CIFARMNIST: it is the CIFAR10 dataset~\cite{cifar} cropped and transformed to 
be $28\times 28$ gray-scale pictures.
Finally, we create a randomly generated dataset called \emph{Noise} as a baseline, with input points uniformly sampled in the $[0,1]^{28\times 28}$ cube.

Our neural network is similar to LeNet, with %32 and 64 channels in the 
two convolutional layers, followed by an Averagepool and two linear layers %and 128 hidden units in the fully-connected layer 
with ReLU activation functions
, {see Appendix~\ref{app:NN_archi} for more details on the architecture}.
This is slightly
more general than %thus complying with
our hypotheses in \cref{sing-sec}.
The model is then trained on MNIST, reaching $98\%$ of accuracy.
\footnote{All the code used in this section is available at \url{https://github.com/eliot-tron/SingFolDIM}.}
% {\color{red}We stop the training in order to retain information on the DIM eigenvalues, which
% are close to zero for a fully trained network.}

In \cref{fig:evalues1} we compute the {DIM} and we measure its rank
by looking at its { eigenvalues} %the eigenvalues of
for 100 sample points in the data space $\R^{784}$ of the MNIST dataset and on 100 { uniformly random points in $[0,1]^{784}$.}
%We report it in \cref{fig:evalues1}.
%\ref{fig:evalues}. Our statistics does not
The statistical significance of these experiments is detailed in \cref{fig:evalues}.
%in different choices of the 100 samples in
%MNIST or randomly in the data space.
\begin{figure}%[h!]
\centering
    \begin{subfigure}{0.49\textwidth}
        \centering
        \includegraphics[width=\textwidth]{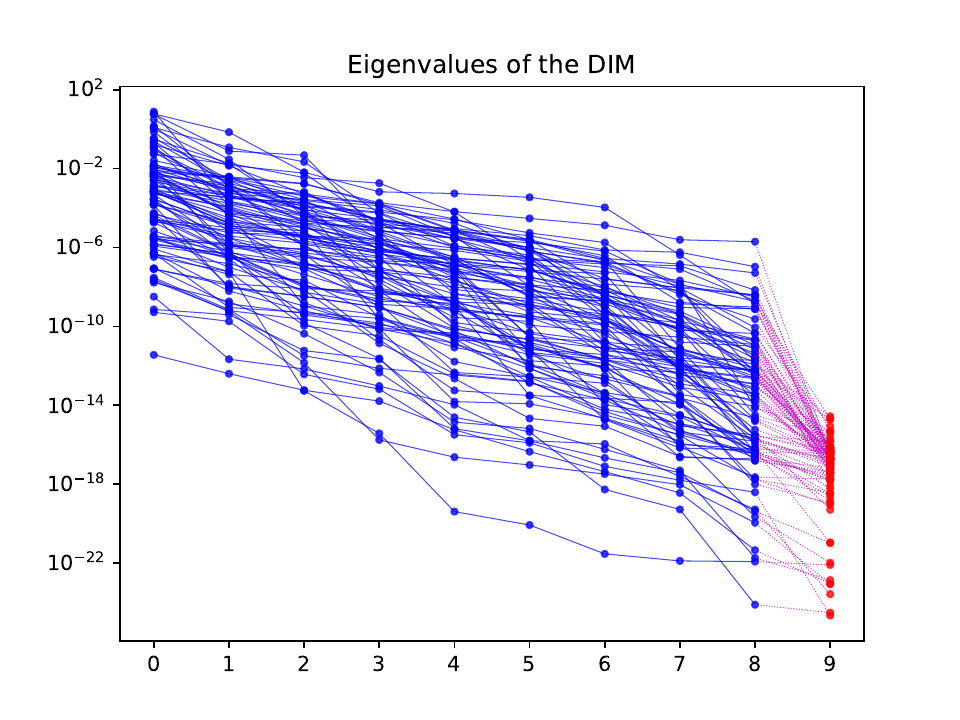}
        \caption{MNIST data points.}
    \end{subfigure}
    \begin{subfigure}{0.49\textwidth}
        \centering
        \includegraphics[width=\textwidth]{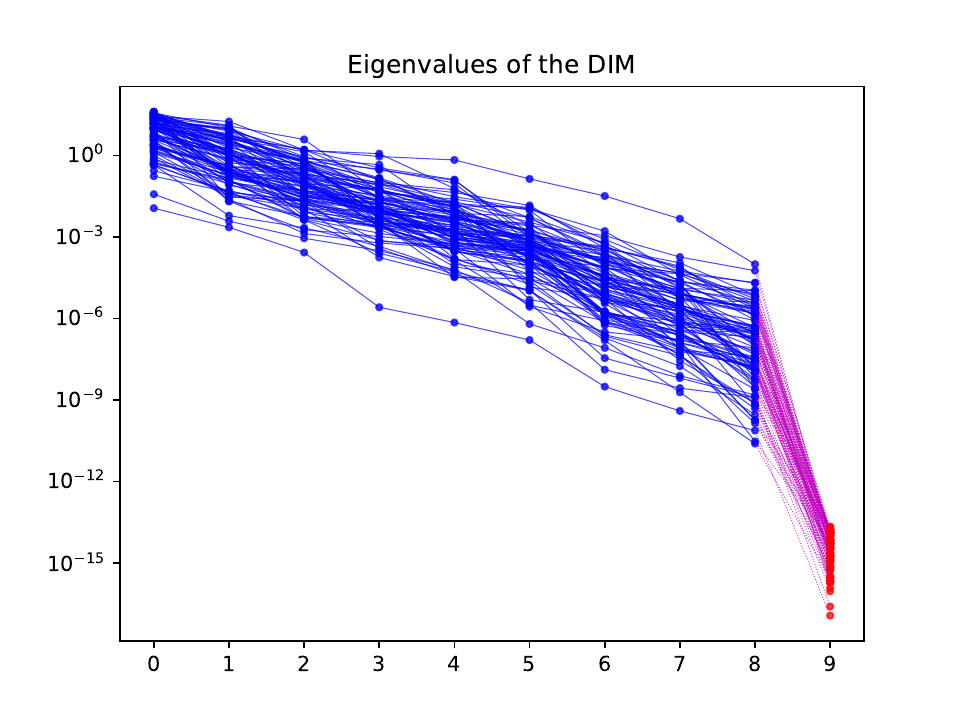}   
        \caption{Uniformly random points (Noise).} % in dataspace.}
    \end{subfigure}
    \caption{DIM eigenvalues sorted by descending order evaluated on 100 points. Each blue line correspond to one data point.}
    \label{fig:evalues1}
\end{figure}
%\cref{fig:evalues} shows 
We see clearly that on points in the training set
the { eigenvalues} %rank 
of {the DIM $D$}, %or  equivalently the rank of the distribution $\cD$, drops
are {smaller}.  
{Since {$\rank D=\rank \cD$} at each point (see \cref{fol-sec}), the drop
of the {eigenvalues of $J_p$} reflects the change
in rank of the distribution $\cD$ in the proximity of datapoints, i.e. the points on which
network was trained. This provides evidence for modeling the dataspace via
a singular foliation (see \cref{obs-sing2}.
%as we remarked in \cref{obs-sing}, as a consequence of
and \cref{lem:ker_P-pp}). }

\begin{figure}
        \centering
        \includegraphics[width=0.8\textwidth]{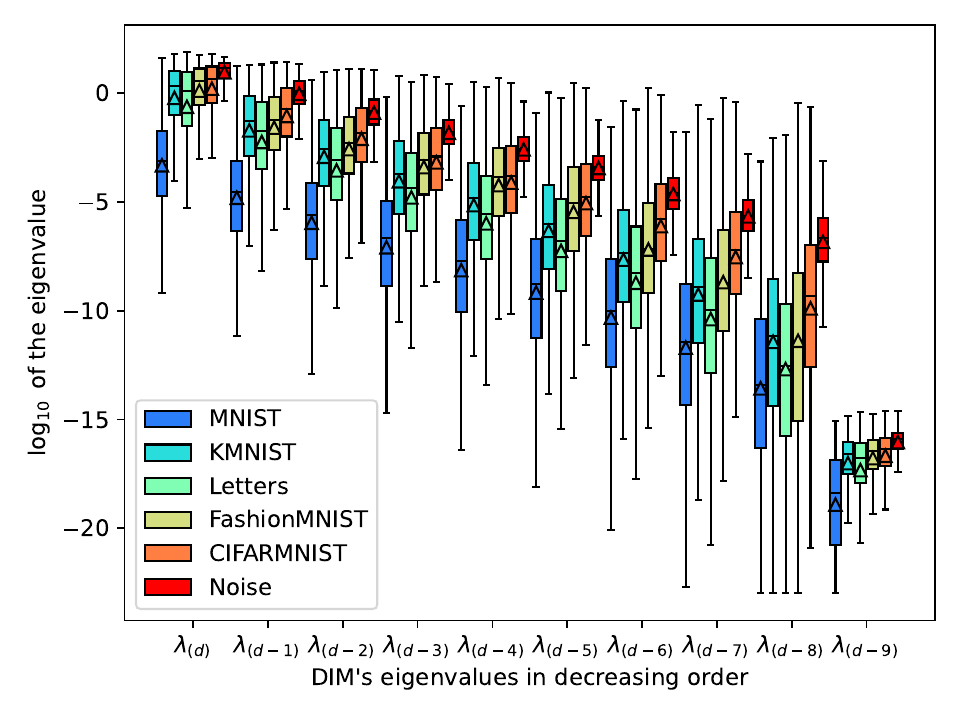}
    \caption{DIM eigenvalues sorted by descending order, evaluated on 10,000 points for each dataset, for a network trained on MNIST.}%
    \label{fig:evalues}
\end{figure}

We compute the eigenvalues $\pa{\lambda_i}_{1\leq i \leq d}$ of the DIM and sort them as such:
\[
				\lambda_{(1)} \leq \ldots \leq \lambda_{(d)}
.\] 
Then, we report the eigenvalues $\lambda_{(d)},\ldots,\lambda_{(d-9)}$, sorted in descending order\footnote{Eigenvalues $\lambda_{(d-10)},\ldots,\lambda_{(1)}$ are not plotted because $\rank D < c=10$, and therefore are equal to zero. $\lambda_{(d-9)}$ is plotted only for reference of what the numerical zero is.} (logarithmic scale), for different datasets in Fig.
\ref{fig:evalues}.

The vertical segment for each eigenvalue and each
dataset represents the values for 80\% of the samples, while the colored area
represents the values falling in between first and third quartile. The horizontal
line represents the median and the triangle represents the mean.
The points in MNIST, the training
dataset, are clearly identifiable by looking at the colored area.
On the opposite, the randomly generated points in Noise yield higher eigenvalues with fewer variations.
This difference in eigenvalues between the various datasets prompts us to consider using the DIM to measure how different the datasets are from the network's perspective.

To provide a point of comparison to our analysis, we have included with \cref{fig:evalues_random_weights} a similar plot to \cref{fig:evalues} but for a neural network with random weights (\textit{i.e.} before any training).
As can be seen, the singularities appear during training when they become concentrated around points similar to those in the training set.
\begin{figure}
        \centering
        \includegraphics[width=0.8\textwidth]{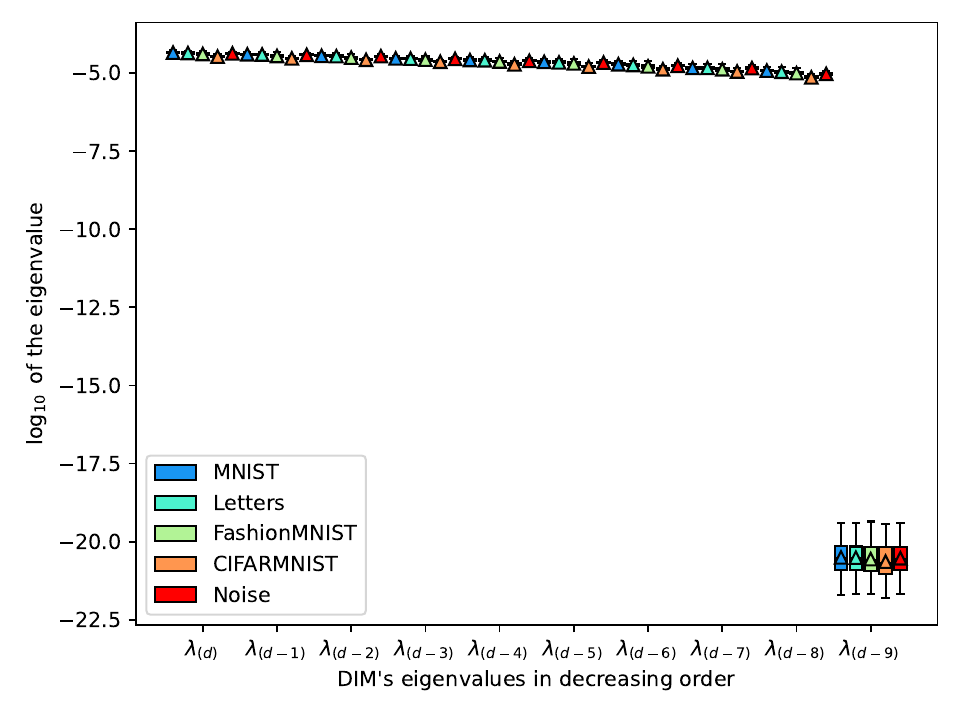}
    \caption{DIM eigenvalues sorted by descending order, evaluated on 10,000 points for each dataset, for a network with random weights.}%
    \label{fig:evalues_random_weights}
\end{figure}

%the difference $\Delta$ 
%between lowest and  highest eigenvaluof the eigenvalues values
%corresponding to the singularities of the distribution $\cD$ and its foliation.
%The information on the trace of the DIM also confirms such difference.

\begin{figure}
    \centering
    \includegraphics[width=0.7\textwidth, height=0.3\textheight]{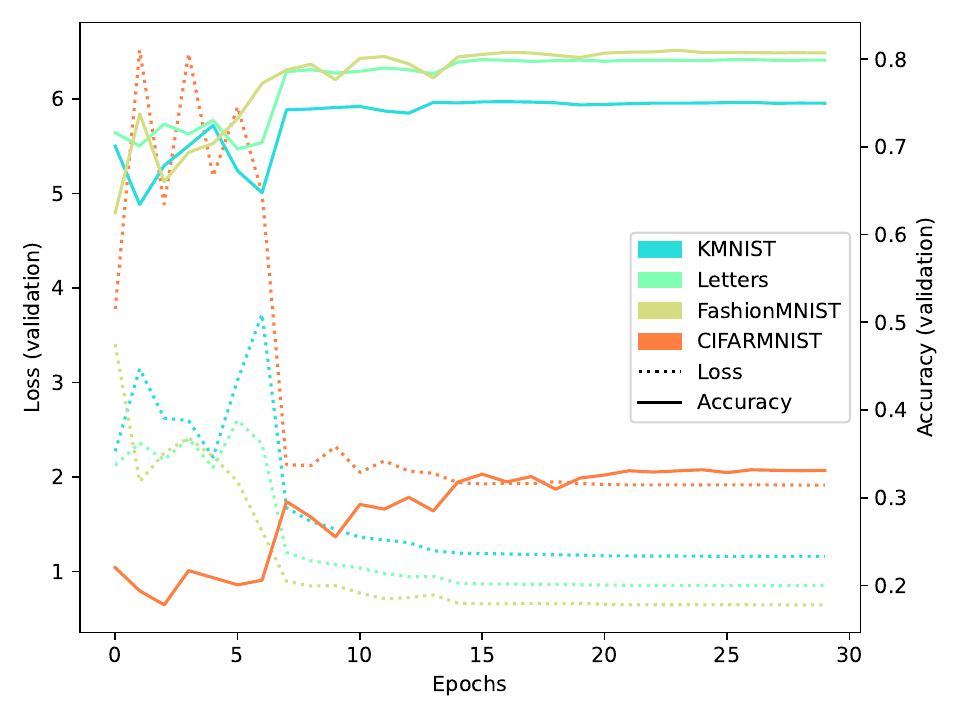}
    \caption{Loss and accuracy after knowledge transfer starting from the weights of a ReLU network trained on MNIST ($98\%$ of accuracy) and retraining only the last linear layer. }
    \label{fig:loss-and-acc}
\end{figure}

We perform a proof of concept knowledge transfer experiment by retraining the
last linear layer of our model on different datasets. {This experiment was selected as it relies on the degree of similarity between datasets from the perspective of the model.}
{
We report in \cref{fig:loss-and-acc} the evolution of the loss (dotted) and accuracy (plain) of the network after 30 epochs of retraining on 4 different datasets. 
We notice a correspondence between the magnitude of the eigenvalues and the validation accuracy suggesting to explore in future works the relationship between DIM, foliations and knowledge transfer. Indeed, CIFARMNIST performs worst than the others and has higher eigenvalues in average.
}

{With the magnitude of the eigenvalues of the DIM, we find similar results as the authors of~\cite{Alvarez-Melis2019Geometric} regarding the similarity between MNIST and the other datasets.}

\rita{
We report in \cref{table2} the median of highest {($\lambda_{(d)}$)} and lowest {non-zero ($\lambda_{(d-8)}$)} DIM eigenvalues in 
logarithmic scale, {$\Delta$ the average length of the vertical segments in \cref{fig:evalues}}, and the validation accuracy after
retraining of our CNN. We see a correspondence between
the median of the lowest {non-zero} eigenvalue and the validation accuracy, suggesting to explore
in future works the relation between DIM, foliations and knowledge transfer.
{The significance of the last non-zero eigenvalue of the DIM lies in its correspondence with the rank of the foliation. Indeed, the eigenvalue $\lambda_{(d-8)}$ being (close to) zero means that the rank drops at (close to) the data point. This supports our conjecture that similar datasets (i.e. with high knowledge transfer potential) will lead to data leaves with similar geometric properties. Moreover it suggests a definition of dataset distance based on such
eigenvalue magnitude.}}

 \begin{table}%[h!]
 \caption{\rita{Parameters for Knowledge Transfer (log scale)}}
   \label{table2}
   \centering   
 %   \begin{tabular}{lccccc}
 %     \toprule
 % Dataset & {$\lambda_{(d)}$}  & {$\lambda_{(d-8)}$} &  $\Delta$ & DIM Trace & Val. Acc. \\ 
 % \toprule
 %     MNIST         & -1.78 & -8.58 & 4.91 & -1.52 & 98\% \\
 %     Fashion-MNIST & 0.14  & -8.08 & 4.85 &  0.12 & 81\% \\
 %     Letters       & 0.11  & -7.99 & 4.75 &  0.48 & 80\% \\
 %     KMNIST        & 0.49  & -7.75 & 4.74 &  0.37 & 75\% \\
 %     CIFARMNIST    & 0.41  & -6.90 & 3.84 &  0.27 & 33\% \\
 %     Noise         & 0.24  & -5.36 & 1.72 &  0.27 & NA   \\
 %     \bottomrule
 %   \end{tabular}
   \begin{tabular}{lccccc}
     \toprule
 Dataset & {$\lambda_{(d)}$}  & {$\lambda_{(d-8)}$} &  $\Delta$ & DIM Trace & Val. Acc. \\ 
 \toprule
     MNIST         & -3.10 & -13.40  & 10.07 & -1.52 & 98\% \\
     Fashion-MNIST & 0.57  & -11.54 & 11.56 &  0.12 & 81\% \\
     Letters       & 0.09  & -12.56 & 11.99 &  0.48 & 80\% \\
     KMNIST        & 0.34  & -11.17 & 10.98 &  0.37 & 75\% \\
     CIFARMNIST    & 0.67  & -9.34 & 9.44 &  0.27 & 33\% \\
     Noise         & 1.17  & -6.66 & 7.58 &  0.27 & NA   \\
     \bottomrule
   \end{tabular}
 \end{table}

%\end{document}

\section{Conclusions}
{We propose to complement the notion of data manifolds
with the more general one of {\sl data foliations}. The 
(integrable) distribution $\cD$, defined 
via the data information matrix (DIM), a generalization of the Fisher
information matrix to the space of data, indeed allows for the partition of the data space according to the leaves of
such a foliation via the Frobenius theorem.
Examples and experiments show a correlation of data points
with the leaves of the foliation:
moving according to the distribution $\cD$, i.e.
along a leaf, the model gives a meaningful label, %maintains the correct label of the model,
while moving in the orthogonal directions leads to greater and greater
classification errors.
The data foliation is however both singular (drop in rank)
and non smooth. We prove that singular points are contained into a set
%the set of %non smooth points and that such set is 
of measure zero,
hence making the data foliation significant in the data space.
%Finally, we show experimentally that points belonging
%to our dataset of study (MNIST) are averagely on leaves of lower ranks. 
We show that points in the dataset the model was trained with have
lower DIM eigenvalues, so that the distribution $\cD$
%Thus, by looking at the learning foliation only, we can
allows successfully to determine whether a sample of points belongs or
not to the dataset used for training. We make such explicit 
comparison with similar datasets (i.e. MNIST versus FashionMNIST, KMNIST etc).
% Then, we use the lowest {non-zero} eigenvalue of the DIM
% to measure the distance between data sets.
{Then, we use the average magnitude of DIM's eigenvalues to understand the distance between data sets.}
We test our proposed distance by retraining our model on
datasets belonging to the same data spaces and checking the
validation accuracy. Our results are not quantitatively conclusive in this
regard, {but they represent a "proof of concept", to encourage further investigation. We believe that our empirical examples} 
show a great promise as a first step to
go beyond the manifold hypothesis and exploiting the theory
of singular foliations to perform dimensionality reduction and
knowledge transfer.}
%We propose a new singular manifold of learning, as not simply
%a manifold but as a singular foliation, determined by a partially
%trained model through the concept of datamatrix.
%We apply this discovery to the knowledge trasfer question: we give
%a measurement of distance between datasets via the datamatrix
%eigenvalues we show how trasfer of knowledge is better as
%the datasets are closer.

\section*{Acknowledgments}
This research was supported by Gnsaga-Indam, by COST Action CaLISTA CA21109, HORIZON2020 CaLIGOLA MSCA-2021-SE-01-101086123 MSCA-DN CaLiForNIA-101119552, PNRR
MNESYS, PNRR National Center for HPC, Big Data and Quantum Computing, PNRR SymQUSEC,
INFN Sezione Bologna
and
PEPR ``ANR-23-PEIA-0004''.

\begin{appendices}
\section{{Neural Network Architecture}}
\label{app:NN_archi}
{
In this section, you can find on \cref{tab:MNIST_architecture} and \cref{fig:lenet} the architecture of the ReLU network trained on MNIST used for the experiment in \cref{exp-sec}. }

\begin{table}[ht]
    \centering
    \caption{Architecture of the neural network trained on MNIST.}
    \label{tab:MNIST_architecture}
    \begin{tabular}{rl}
    \toprule
    No.  & Layers (sequential) \\
    \midrule
		(0): & \texttt{Conv2d(1, 32, kernel\_size=(3, 3), stride=(1, 1))} \\
  (1): & \texttt{ReLU()}\\
  (2): & \texttt{Conv2d(32, 64, kernel\_size=(3, 3), stride=(1, 1))}\\
  (3): & \texttt{ReLU()}\\
  (4): & \texttt{AvgPool2d(kernel\_size=2, stride=2, padding=0)}\\
  (5): & \texttt{Flatten()}\\
  (6): & \texttt{Linear(in\_features=9216, out\_features=128, bias=True)}\\
  (7): & \texttt{ReLU()}\\
  (8): & \texttt{Linear(in\_features=128, out\_features=10, bias=True)}\\
  (9): & \texttt{Softmax()}\\
  \bottomrule
    \end{tabular}
\end{table}

\begin{figure}[ht]
    \centering
    \includegraphics[width=0.7\textwidth]{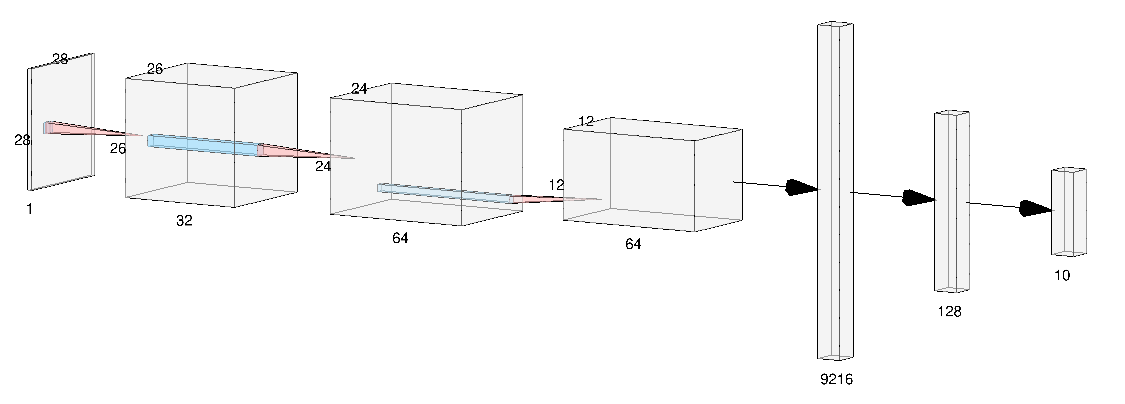}
    \caption{The structure of our CNN - picture created with~\cite{lenail}.}
    \label{fig:lenet}
\end{figure}

\section[Proofs]{Proof of \cref{prop:fol}}\label{app:proof_prop_fool}

Note that these computations can also be found in \citep{gf2021,tronCartanMovingFrames2024} under different forms.

First, recall that
\[
				\bra{u^k\partial_k, v^l \partial_l} = \pa{u^k \der{v^l}{x^k} - v^k \der{u^l}{x^k} } \partial_l
.\]
Thus, if $1\leq a,b \leq c-1$, $u^k = \der{p_a}{x^k}$ and $v^l = \der{p_b}{x^l}$ we get:
\begin{align}
\bra{u, v} &= \pa{\sum_k \der{p_a}{x^k} \der{^2 p_b}{x^k\partial x^l} -
\sum_k\der{p_b}{x^k} \der{^2 p_a}{x^k\partial x^l} } \partial_l \nonumber \\
					 &=  H\pa{p_b} u -  H\pa{p_a} v \label{eq:lie_bracket_D}
\end{align}
with $H(p_a)$ the Hessian matrix of $p_a = N(x)_a$.

Let $s$ represent the score vector, \textit{i.e.} the output of the neural network right before going through the softmax function.
Then, for all $i,j=1,\ldots,c$,
\begin{equation}\label{eq:nabla_p_a:CC}
\partial_{j} p_i = \sum_k p_i \pa{\delta_{ik} - p_k} \partial_{j} s_k 
.\end{equation} 
This is straightforward from the definition of the softmax function.

We can then express the Hessian matrix of $p_a$ with respect to $s$ and its derivative as follow.
\begin{align*}
 H\pa{p_a}_{ij} &= \partial_{i}\pa{\partial_{j} p_a } = \partial_{i}\pa{\sum_k p_a \pa{\delta_{ik} - p_k} \partial_{j} s_k} \\
								&= \partial_{i} p_a \sum_k \pa{\delta_{ak} - p_k} \partial_{j} s_k + p_a \sum_k \partial_{i}\pa{\pa{\delta_{ak}-p_k} \partial_{j} s_k} \\
								&= \partial_{i} p_a \sum_k \pa{\delta_{ak} - p_k} \partial_{j} s_k + p_a \sum_k \pa{\delta_{ak} - p_k}  H\pa{s_k} -
p_a \sum_k \partial_{i} p_k \partial_{j} s_k\\
								&= \sum_k  \pa{\pa{\delta_{ak} - p_k}\partial_{i} p_a  - p_a \partial_{i} p_k} \partial_{j} s_k + p_a \sum_k \pa{\delta_{ak} - p_k}  H\pa{s_k}
.\end{align*}

In the case of ReLU activation functions, the second order derivative of the score $s$ vanishes almost everywhere\footnote{This question is the focus of \cref{thm:main}.}.
We can thus safely discard the last term.

Now, we compute $H(p_a) \nabla_x p_b$ by plugging in the above expression, taking advantage of $H(p_a)$'s symmetry:
\begin{align*}
				\pa{H\pa{p_a} \nabla_x p_b}_j &= \sum_i \pa{ \sum_k \pa{\pa{\delta_{ak} - p_k}\partial_{j} p_a  - p_a \partial_{j} p_k } \partial_{i} s_k } \partial_{i} p_b \\
				\begin{split}
												&= \sum_i \pa{ \sum_k \pa{\delta_{ak} - p_k}\partial_{j} p_a \partial_{i} s_k} \partial_{i} p_b \\
												&	\qquad - p_a \sum_{i}\pa{\sum_k \partial_{j} p_k \partial_{i} s_k} \partial_{i} p_b \\
				\end{split}\\
				% &= \sum_k \pa{\delta_{ak} - p_k} \nabla p_a \pa{\nabla s_k^t \nabla p_b} - p_a \sum_k \nabla p_k \pa{\nabla s_k^t \nabla p_b} \\
					&=  \pa{\sum_{k,i} \pa{\delta_{ak} - p_k}\partial_{i} s_k \partial_{i} p_b} \partial_{j} p_a  - p_a \sum_{k,i}  \pa{\partial_{i} s_k \partial_{i} p_b} \partial_{j} p_k %\tag{they are scalars}.
.\end{align*}

Therefore,
\[
				H(p_a)\nabla_x p_b = \pa{\sum_{k,i} \pa{\delta_{ak} - p_k}\partial_{i} s_k \partial_{i} p_b} \nabla_x p_a  - p_a \sum_{k,i}  \pa{\partial_{i} s_k \partial_{i} p_b} \nabla p_k 
.\] 
This conclude the proof by showing that, for ReLU activation functions, we have $H(p_a)\nabla_x p_b \in \Span_{i=1, \ldots, c}\{\nabla_x \ln p(y_i|x,w)\}$
Thus the distribution $\cD$ is locally involutive and Frobenius theorem applies.

\end{appendices}

\bibliography{main}% common bib file
%% if required, the content of .bbl file can be included here once bbl is generated
%%\input sn-article.bbl

\end{document}